\documentclass[10pt,journal,compsoc]{IEEEtran}

\usepackage{cite}
\usepackage[cmex10]{amsmath}
\usepackage{amssymb,amsfonts}
\usepackage{bm}
\usepackage{graphicx}
\usepackage{booktabs}
\usepackage{array}
\usepackage{multirow}
\usepackage{url}
\usepackage{balance}
\usepackage{algorithm}
\usepackage{algorithmic}

\newtheorem{proposition}{Proposition}
\newtheorem{theorem}{Theorem}
\newtheorem{corollary}{Corollary}

\newcommand{\lhat}{\hat\lambda}
\newcommand{\deff}{d_{\mathrm{eff}}}
\newcommand{\muhat}{\hat{\bm\mu}}
\newcommand{\mustar}{\bm\mu^{*}}
\newcommand{\vt}{\bm t}
\newcommand{\CI}[2]{[#1,\,#2]}
\newcommand{\CIonly}[2]{[#1,\,#2]}
\newcommand{\PCIN}{$+1.52$ \CI{+0.36}{+2.68}}
\newcommand{\PCOUT}{$-0.52$ \CI{-2.24}{+1.20}}
\newcommand{\CLAPPC}{$+2.18$ \CI{+0.17}{+4.20}}
\newcommand{\OFFQ}{$-0.63$ to $-0.26$\,pp}
\newcommand{\OFFH}{$-2.21$ to $-0.84$\,pp}
\newcommand{\DNORM}{1.02}
\newcommand{\DCREAL}{$-9.67$ \CI{-14.37}{-4.96}}
\newcommand{\DCNOD}{$-7.17$ \CI{-10.86}{-3.47}}
\newcommand{\DCREC}{$+2.50$ \CI{+0.91}{+4.09}}
\newcommand{\DCSHARE}{26\%}
\newcommand{\DCCELLS}{114}

\newcommand{\AUGCLAP}{$-0.08$ \CI{-0.52}{+0.36}}
\newcommand{\AUGLPP}{$-0.98$ \CI{-2.13}{+0.18}}
\newcommand{\AUGHOSO}{$+0.28$ \CI{+0.04}{+0.52}}

\newcommand{\AUGGDAKone}{$-18.83$ \CI{-30.66}{-6.99}}
\newcommand{\AUGGDAKtwo}{$+4.28$ \CI{+1.19}{+7.36}}
\newcommand{\COOPCLAP}{$+2.26$ \CI{+1.04}{+3.49}}

\newcommand{\COOPLOO}{$-1.64$ \CI{-2.93}{-0.35}}
\newcommand{\COOPJS}{$-7.68$ \CI{-13.70}{-1.67}}
\newcommand{\COOPSHARE}{$80.3\%$ \CI{74.8}{85.8}}
\newcommand{\AUGCELLS}{817}
\newcommand{\COOPCELLS}{349}
\newcommand{\AUGDS}{10}

\newcommand{\AUGPAIRS}{39 of the 50}
\newcommand{\COOPPAIRS}{23 of the 50}

\newcommand{\TIERPHOTOCLAP}{$+2.11$ \CI{+1.21}{+3.01}}
\newcommand{\TIERPHOTOLPP}{$+1.86$ \CI{+0.82}{+2.89}}
\newcommand{\TIERENSCLAP}{$+1.85$ \CI{+0.85}{+2.86}}
\newcommand{\TIERENSLPP}{$+1.28$ \CI{+0.21}{+2.34}}
\newcommand{\TIERCUPLCLAP}{$+1.67$ \CI{+0.73}{+2.60}}
\newcommand{\TIERCUPLLPP}{$+1.28$ \CI{+0.22}{+2.33}}

\newcommand{\TIERCELLS}{3{,}600}
\newcommand{\TIERDS}{10}

\newcommand{\CAPRATIO}{55\%}
\newcommand{\CAPSPEAR}{+0.61}
\newcommand{\CAPSPEARCI}{\CI{+0.19}{+1.00}}
\newcommand{\CAPPARTIAL}{+0.52}
\newcommand{\CAPWDS}{+0.70}
\newcommand{\CAPWBK}{+0.62}
\newcommand{\DEFFMED}{32}
\newcommand{\DEFFRANGE}{11--45}
\newcommand{\SECLOSEKtwo}{1.15}
\newcommand{\SECLOSEKsixteen}{1.09}
\newcommand{\SEEDSDKtwo}{$2.4\times10^{-2}$}
\newcommand{\SEEDSDKsixteen}{$8.2\times10^{-4}$}
\newcommand{\AONEOPT}{9}

\newcommand{\SIGTWOCLAP}{$+1.29$ \CI{+0.17}{+2.42}}
\newcommand{\SIGTWOLPP}{$+1.50$ \CI{-0.22}{+3.22}}
\newcommand{\SIGTWOLPPALL}{$-0.16$ \CI{-1.45}{+1.14}}
\newcommand{\DINOTHREE}{$+7.21$ \CI{+0.41}{+14.01}}
\newcommand{\DINOCLIP}{$-2.24$ \CI{-6.25}{+1.76}}

\newcommand{\SIGTWOLAMJS}{0.97}
\newcommand{\SIGTWOLAMOR}{0.30}

\newcommand{\suppself}{This appendix}
\newcommand{\suppselflower}{this appendix}
\newcommand{\restate}[2]{\medskip\noindent\textbf{#1}\ \emph{#2}\medskip}

\begin{document}

\title{The Blending Ratio Is Not Where the Performance Is:\\
Diagnosing Prototype Blending for Few-Shot Adaptation of Vision--Language Models}

\author{Liangzhi~Li, Bowen~Wang, Yiming~Qian, Thorsten~Neumann, Xia~Xie, and~Guangshun~Li%
\IEEEcompsocitemizethanks{\IEEEcompsocthanksitem L.~Li is with the Department of Computer Science, Qufu Normal University, China, and with Climind (e-mail: conscienceli@gmail.com). B.~Wang is with the Institute of Scientific and Industrial Research (SANKEN), The University of Osaka, Japan (e-mail: wang@im.sanken.osaka-u.ac.jp). Y.~Qian is with the Institute of High Performance Computing (IHPC), A*STAR, Singapore. T.~Neumann is with Standard Chartered Bank, Singapore. X.~Xie is with the School of Aerospace Intelligence, Hainan University, China. G.~Li is with the School of Computer Science, Qufu Normal University, China (e-mail: guangshunli@qfnu.edu.cn).
\IEEEcompsocthanksitem \emph{(Corresponding author: Guangshun Li.)}}%
\thanks{\textbf{Artefacts:} the release carries the pipeline, per-cell feature caches under both split protocols, the augmented-view caches, and all $4{,}800$ records; \texttt{scripts/build.sh} rebuilds every table and figure on a CPU.}}

\markboth{}{Li \MakeLowercase{\textit{et al.}}: The Blending Ratio Is Not Where the Performance Is}

\IEEEtitleabstractindextext{%
\begin{abstract}
Many few-shot adaptation methods for vision--language models classify with a convex combination of the zero-shot text prototype and the mean of the $K$ labelled image features, governed by a single blending ratio routinely tuned on held-out labels, often on the test set itself. This paper asks the question that the family's own bias--variance justification invites: what is the \emph{right} value of the blending ratio, can it be estimated without validation data, and is finding it actually where the performance is? We first show that the ratio minimising prototype mean-squared error has a closed form whose support-set plug-in is exactly a positive-part James--Stein coefficient shrinking towards the text prototype. Across a 4{,}800-cell matrix (ten datasets, five backbones including SigLIP, five shot counts, five seeds, four prompt tiers; the 5{,}000-cell grid minus 200 thin-pool drops) this theoretically optimal ratio is a reliable estimate of the wrong quantity: on the $950$ cells of the primary prompt tier where it is defined it trails a test-set-oracle ratio by $8.5$ accuracy points. It saturates at $\lambda\!\approx\!1$, discarding the text prior and reproducing a nearest-class-mean classifier, because $78\%$ of the text--image prototype distance it treats as bias is a class-independent offset that the arg\,max largely cancels. We prove the mechanism and measure its share of the damage by a counterfactual ($26\%$). Second, a blend whose ratio is set by leave-one-out on the support set alone lands within $0.9$ points of the oracle-ratio blend's \emph{accuracy}, so the ratio \emph{is} estimable without validation data. Third, and most consequentially, validation-free linear probes beat even the oracle-tuned blend: CLAP by $+1.9$ points and LP++ by $+1.5$ on average, and at $K{\ge}4$ all four reproduced validation-free baselines sit above the oracle, the two linear probes by margins whose intervals exclude zero and that are invariant to solver precision. These results locate the family's ceiling in its model class, not its hyperparameter: the blending ratio can be set near-optimally for free, and it is still not where the performance is. Code, cached features, and all per-cell records: \url{https://huggingface.co/datasets/Liangzhi-Li/clipbench-blending}.
\end{abstract}

\begin{IEEEkeywords}
Vision--language models, CLIP, few-shot learning, transfer learning, shrinkage estimation, James--Stein, model selection, evaluation protocols.
\end{IEEEkeywords}}

\maketitle
\IEEEdisplaynontitleabstractindextext
\IEEEpeerreviewmaketitle

\IEEEraisesectionheading{\section{Introduction}\label{sec:intro}}

\begin{figure*}[!t]
\centering
\includegraphics[width=0.86\textwidth]{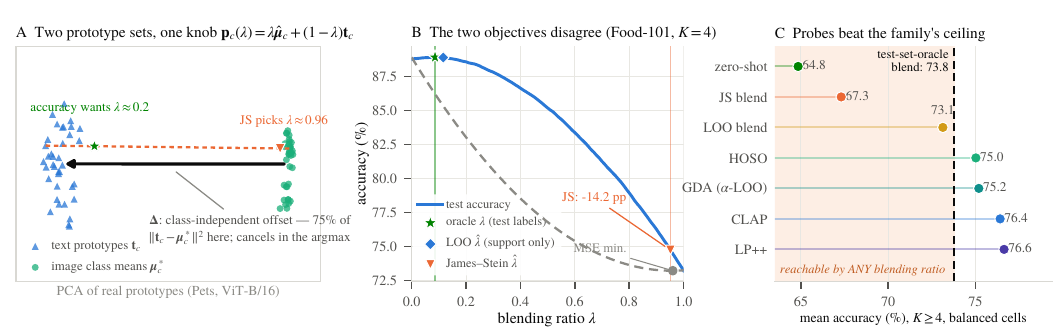}
\caption{\textbf{A:} PCA of the prototype geometry (Pets, ViT-B/16): the common offset $\bm\Delta$ (the modality gap) is $75\%$ of the squared text--image prototype distance here, yet costs far less at decision time (Thm.~\ref{thm:shift}; Sec.~\ref{sec:perclass}). Blending slides each prototype along its dotted path; marks are the two objectives' choices. \textbf{B:} measured accuracy and prototype MSE vs.\ $\lambda$ (Food-101, $K{=}4$): the MSE minimum ($\lambda\approx0.96$, where Eq.~\eqref{eq:jshat} sits) costs $14.2$ accuracy points against the accuracy peak ($\lambda\approx0.09$, which leave-one-out finds). \textbf{C:} mean accuracy, $K\ge4$, balanced cells: every reproduced validation-free baseline exceeds the \emph{test-set-oracle} blend (dashed): the family is capacity-limited, not mis-tuned.}
\label{fig:teaser}
\end{figure*}

\IEEEPARstart{C}{ontrastively} pre-trained vision--language models such as CLIP~\cite{radford2021clip}, ALIGN~\cite{jia2021align}, and SigLIP~\cite{zhai2023siglip} turned image classification into retrieval. Given a few labelled images per class, the natural upgrade averages each class's $K$ image features into an image prototype and classifies by a convex combination of the two modalities,
$\bm p_c(\lambda) = \lambda\,\muhat_c + (1-\lambda)\,\vt_c$,
with $\vt_c$ the text prototype, $\muhat_c$ the empirical image prototype, and $\lambda\in[0,1]$ a \emph{blending ratio} deciding how far to trust the few labelled shots over the frozen prior (Fig.~\ref{fig:teaser}A). This underlies much of the few-shot adaptation literature: cache models blend zero-shot logits with support-set affinities~\cite{zhang2022tip,zhu2023ape,udandarao2023susx,zhang2023cafo}, adapters residual with original features~\cite{gao2024clipadapter}, task-residual methods add a learned offset to the text prototype~\cite{yu2023taskres}, Gaussian discriminant heads blend generative with zero-shot logits~\cite{wang2024gda}, training-free prototype methods the two modality means~\cite{goswami2026protomix,lin2023crossmodal}. Each has a knob---$\alpha$, $\lambda$, a residual ratio, a mixing coefficient---arbitrating between prior and evidence.

\looseness=-1 How is that knob set? Almost never from the $K$ shots alone. The original Tip-Adapter searches its $\alpha$ and $\beta$ on the test set~\cite{zhang2022tip}; the CoOp-inherited few-shot CLIP protocol~\cite{zhou2022coop} ships extra validation images that strict few-shot learning lacks; the official ``hard-to-beat'' GDA implementation~\cite{wang2024gda} grid-searches its blending weight on held-out labels. A critical literature documents this: CLAP~\cite{silva2024closer} showed state-of-the-art adapters need per-task tuning on large validation sets to beat a linear probe; LP++~\cite{huang2024lppp} observed hyperparameters tuned ``on the entire test set''; HOSO~\cite{vorster2026hoso} formalised a validation-free protocol because most published numbers are not; benchmark critiques found the same pathology in test-time adaptation~\cite{sheng2025illusion} and few-shot transfer~\cite{luo2026fewtrans}.

\looseness=-1 The standard argument for blending, explicit in recent work~\cite{goswami2026protomix}, is statistical: the image prototype is an unbiased, high-variance estimate of the true class mean, the text prototype a biased, zero-variance one; blending trades bias against variance. If that frame is right, three questions should have quantitative answers. \emph{First}, what is the optimal blending ratio, and can it be computed rather than searched? \emph{Second}, can it be estimated from the $K$ support shots alone, with no validation data? \emph{Third}, the question nobody asks: if the ratio were set \emph{perfectly}, would the classifier be competitive? We answer all three at scale: 4{,}800 cells (a 5{,}000-cell grid minus 200 guarded drops, Sec.~\ref{sec:stats}) over ten datasets, five backbones, five shot counts, five support-set seeds, and four tiers of prompt quality, with four published validation-free baselines reproduced from official code and every comparison a paired difference with cluster-bootstrap confidence intervals.

\emph{(i) The theoretically optimal ratio is a reliable estimate of the wrong quantity.} Minimising prototype mean-squared error yields a closed-form ratio whose support-set plug-in is exactly a positive-part James--Stein coefficient~\cite{james1961estimation,efron1973stein} shrinking the image prototype towards the text prototype---and provably dominating it at that objective (Prop.~\ref{prop:dom}). Yet the classifier it induces trails a test-set-oracle ratio by $8.5$ accuracy points on average and is statistically indistinguishable from zero-shot. The reason is structural: we measure $78.3\%$ \CI{74.6}{82.1} of the squared text--image prototype distance as a \emph{class-independent} offset, the modality gap~\cite{liang2022mind}---one the arg\,max cancels but the MSE objective books as bias of the text prototype. On SigLIP the share rises to $86.6\%$: the phenomenon belongs to contrastive pre-training at large, not to CLIP. \emph{(ii) The ratio is estimable without validation data.} A leave-one-out estimator re-using all $K$ shots matches the test-set-oracle blend to within $0.9$ accuracy points (ratio rank correlation $0.79$) and responds correctly to interventions: upgrading the prompt from a generic template to LLM descriptions~\cite{pratt2023cupl} lowers the oracle ratio, and the estimate tracks the shift. \emph{(iii) Even the oracle-tuned blend is not competitive.} Validation-free linear probes beat the family's own unreachable upper bound: CLAP by $+1.92$ \CI{+0.87}{+2.98} points and LP++ by $+1.46$ \CI{+0.34}{+2.58} on average. At $K\ge4$ all four reproduced baselines are above the test-set-oracle blend in point estimate, three of them with intervals excluding zero. One caveat on scope: the Eq.~\eqref{eq:blend} members we evaluate are our own constructions (Sec.~\ref{sec:scope}), so the bound concerns the family's supremum, not any published system.

We believe this reframes how the prototype-blending literature should be read (Sec.~\ref{sec:discussion}): our results say the blending axis is worth at most the distance to the oracle, under one point for a well-estimated ratio, while the model-class gap to a plain linear probe on the same shots runs two to three points the other way. For practitioners this yields a validation-free recipe; for theory, it explains why the bias--variance story fails and supplies the decomposition that any future justification of blending will have to address.

\smallskip\noindent\textbf{Contributions.}
(1)~A closed form for the MSE-optimal blending ratio, identification of its support-set plug-in with positive-part James--Stein shrinkage, and its exact sampling theory (Prop.~\ref{prop:sampling}; Sec.~\ref{sec:theory}).
(2)~A proof (Theorem~\ref{thm:shift}, Corollaries~\ref{cor:tworisk}--\ref{cor:sstar}) and a 4{,}800-cell measurement that this ratio over-trusts image evidence by construction, because $78\%$ of the modality gap it penalises is class-independent and the arg\,max largely cancels (Secs.~\ref{sec:theory},~\ref{sec:results}; a counterfactual puts its share of the damage at $26\%$).
(3)~A leave-one-out ratio estimator using only the $K$ support shots, within $0.9$ accuracy points of a test-set oracle, plus a rule making the published GDA baseline validation-free (Sec.~\ref{sec:estimation}).
(4)~The capacity result, now with its theory: validation-free linear probes exceed the test-set-oracle blend (Sec.~\ref{sec:results}), the blend path provably cannot reach the unrestricted separation a probe can aim at (Theorem~\ref{thm:capacity}), and the theorem's predicted headroom picks out the strata where the measured surplus is largest.
(5)~A released benchmark---code, cached features, per-cell records, and line-by-line reproductions of CLAP, LP++, GDA, and HOSO---with paired statistics and a documented audit trail of every protocol decision (Sec.~\ref{sec:protocol}).

\section{Related Work}\label{sec:related}

\subsection{Few-Shot Adaptation of Vision--Language Models}
Adapting a frozen vision--language model with $K$ labelled images per class has produced several method families. \emph{Prompt tuning} learns continuous text-side context vectors (CoOp~\cite{zhou2022coop}, CoCoOp~\cite{zhou2022cocoop}, MaPLe~\cite{khattak2023maple}), regularised to stay near the hand-crafted context~\cite{zhu2023prograd,yao2023kgcoop,chen2023plot}---the prior--evidence tension again, as a penalty rather than a ratio. \emph{Adapter methods} attach lightweight modules on cached features: CLIP-Adapter blends a bottleneck MLP's output with the original feature through a residual ratio~\cite{gao2024clipadapter}; Task Residual adds a learned offset to the text prototypes~\cite{yu2023taskres}. \emph{Cache-based methods} start from Tip-Adapter~\cite{zhang2022tip}, which adds an $\alpha$-weighted support-set affinity term to the zero-shot logits; descendants refine the cache~\cite{zhu2023ape,udandarao2023susx,zhang2023cafo,zhang2024dmn,karmanov2024tda}, and kernel analyses unify the branch as regularised regression~\cite{bendou2025proker,ding2024representer}. \emph{Generative heads} blend fitted Gaussian-discriminant-analysis logits with the zero-shot logits~\cite{wang2024gda}. \emph{Prototype methods} blend the modality means directly~\cite{lin2023crossmodal,goswami2026protomix}. Our subject cuts across these families: every one owns a scalar trading prior against evidence, and we analyse that scalar, not the architecture around it.

\subsection{Evaluation Protocols and the Validation-Set Problem}
The standard few-shot CLIP protocol inherited from CoOp quietly assumes labelled data beyond the $K$ shots. CLAP~\cite{silva2024closer} showed that leading adapters outperform a linear probe only when tuned per task on a large validation set, and proposed a class-adaptive probe needing none. LP++~\cite{huang2024lppp} derived data-driven step sizes from Lipschitz constants, making a strong linear probe without learning-rate search. HOSO~\cite{vorster2026hoso} named the setting, \emph{validation-free} few-shot adaptation, and fit CLIP-Adapter's blending ratio on one held-out shot; the critique recurs in the medical domain~\cite{silva2025fewshot}. At benchmark scale, Luo \emph{et al.}~\cite{luo2026fewtrans} argue that cross-validation on $K$ shots is unreliable and reported rankings are dominated by hyperparameter access; Sheng \emph{et al.}~\cite{sheng2025illusion} re-evaluate test-time adaptation (a parallel line adapting without support labels~\cite{shu2022tpt,karmanov2024tda,zhang2024dmn,zhu2024awt}) under equalised tuning and find most gains evaporate; Kravets \emph{et al.}~\cite{kravets2025rethinking} show that CLIP's pre-training corpus leaks class information into the benchmarks. Our work differs: rather than asking whether reported gains survive fair tuning, we bound what \emph{perfect} tuning could ever deliver, and show the bound is below a validation-free linear probe.

\subsection{Shrinkage Estimation}
Shrinking a noisy mean towards a fixed target is a classical idea in statistics, running from Stein's inadmissibility result~\cite{stein1956} through the James--Stein estimator~\cite{james1961estimation} to its empirical-Bayes reading and positive-part variant~\cite{efron1973stein,lehmann1998theory}. In classification it appears as nearest shrunken centroids~\cite{tibshirani2002nsc} and, in few-shot learning, as prototype classifiers~\cite{snell2017protonet} with SimpleShot's centre-then-normalise transform~\cite{wang2019simpleshot}; our NCM baseline and centred variant are the VLM analogues of that line, except that the vector we subtract is a \emph{cross-modal} offset rather than a within-modality mean. Ledoit--Wolf covariance shrinkage~\cite{ledoit2004well} appears inside the GDA baseline~\cite{wang2024gda}, but it shrinks the \emph{covariance}, not the prior--evidence ratio at issue here. In the VLM literature, Goswami \emph{et al.}~\cite{goswami2026protomix} recently derived the bias--variance decomposition of the blended prototype and observed empirically that the optimal mixing weight falls as shots decrease, but selected it by grid search against oracle statistics and, in their own words, still ``require a validation set to tune the prototype mixing coefficient.'' Sec.~\ref{sec:theory} completes that derivation and identifies the plug-in with positive-part James--Stein shrinkage. We claim no novelty for the estimator itself~\cite{lehmann1998theory}, nor for the classical fact that bias and variance do not combine additively under $0/1$ loss~\cite{friedman1997bias,domingos2000unified}; what is new is the vision--language instantiation and its cause, together with the identification, its corollaries, and the measurement of the resulting cost. The class-independent component that breaks the MSE argument is the \emph{modality gap} of Liang \emph{et al.}~\cite{liang2022mind}; we appear to be the first to quantify its decision-time irrelevance for blended prototypes. Wider evidence agrees that the gap can be manipulated without tracking accuracy~\cite{zhou2023dn,liang2022mind,schrodi2025two}, and the one few-shot entry uses a \emph{learned} map rather than a subtraction~\cite{yang2024crossmodal}; our centred variant (Sec.~\ref{sec:results}) adds the missing few-shot data point. Weight-space interpolation~\cite{wortsman2022wiseft} is a cousin of feature-space blending; we analyse the latter, which admits a closed form.

\section{The Blending Ratio in Theory}\label{sec:theory}

We take the bias--variance justification of blending at face value, push it to its closed form (Sec.~\ref{sec:mseopt}), quantify its estimation precision from $K$ shots (Sec.~\ref{sec:sampling}), then prove why this approach to optimisation was never the right goal (Sec.~\ref{sec:whywrong}). The algebra is elementary; Sec.~\ref{sec:results} scores every prediction derived here.

\subsection{Setup}
Fix a class $c$ and drop the subscript. Let $\bm x_1,\dots,\bm x_K\in\mathbb R^d$ be the $\ell_2$-normalised image features of the $K$ support shots, drawn from a population with mean $\mustar=\mathbb E[\bm x]$ and covariance $\bm\Sigma^*$. The empirical prototype $\muhat=\frac1K\sum_i\bm x_i$ is unbiased with $\operatorname{Cov}[\muhat]=\bm\Sigma^*/K$. The text prototype $\vt$, from a fixed prompt, is deterministic. The blended prototype is
\begin{equation}
\bm p(\lambda)\;=\;\lambda\,\muhat+(1-\lambda)\,\vt,\qquad \lambda\in[0,1],
\label{eq:blend}
\end{equation}
and classification is by cosine similarity to the re-normalised $\bm p_c(\lambda)$ over classes $c=1,\dots,C$. Two scalars control the theory below:
\begin{equation}
g^2 \;=\; \bigl\|\vt-\mustar\bigr\|^2,
\qquad
v \;=\; \operatorname{tr}(\bm\Sigma^*)/K .
\label{eq:gv}
\end{equation}
Where a sampling distribution is needed we invoke the isotropic Gaussian model
\begin{equation}
\text{(A1)}\qquad \bm x_i\overset{\text{iid}}{\sim}\mathcal N\bigl(\mustar,\sigma^2\bm I_d\bigr),\qquad \sigma^2=v K/d ,
\label{eq:a1}
\end{equation}
and we flag each use; results not citing (A1) are assumption-free.

\subsection{The MSE-Optimal Ratio and Its James--Stein Plug-In}\label{sec:mseopt}
Because $\muhat-\mustar$ has mean zero and $\vt-\mustar$ is deterministic, the mean-squared error of the blend decomposes as
\begin{equation}
\operatorname{MSE}(\lambda)\;=\;\mathbb E\bigl\|\bm p(\lambda)-\mustar\bigr\|^2\;=\;(1-\lambda)^2 g^2+\lambda^2 v ,
\label{eq:mse}
\end{equation}
a decomposition already published by Goswami \emph{et al.}~\cite{goswami2026protomix}, who use it to argue that blending acts as a shrinkage estimator.

\begin{proposition}[MSE-optimal ratio]\label{prop:lamstar}
$\operatorname{MSE}(\lambda)$ is strictly convex with unique minimiser and value
\begin{equation}
\lambda^{*}\;=\;\frac{g^{2}}{g^{2}+v},
\qquad
\operatorname{MSE}(\lambda^{*})\;=\;\frac{g^{2}v}{g^{2}+v}\;=\;\Bigl(\tfrac{1}{g^2}+\tfrac{1}{v}\Bigr)^{-1}.
\label{eq:lamstar}
\end{equation}
\end{proposition}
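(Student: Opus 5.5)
The plan is to work directly from the decomposition~\eqref{eq:mse}. Nothing beyond unbiasedness of $\muhat$ is needed, so (A1) is not invoked. First I would re-derive~\eqref{eq:mse}, since the whole statement rests on it. Write $\bm p(\lambda)-\mustar=\lambda(\muhat-\mustar)+(1-\lambda)(\vt-\mustar)$ and expand the squared norm. The cross term is $2\lambda(1-\lambda)\,\mathbb E[\muhat-\mustar]^\top(\vt-\mustar)$, and it vanishes because $\vt$ is deterministic and $\mathbb E[\muhat]=\mustar$. The variance term is $\lambda^2\,\mathbb E\|\muhat-\mustar\|^2=\lambda^2\operatorname{tr}\operatorname{Cov}[\muhat]=\lambda^2\operatorname{tr}(\bm\Sigma^*)/K=\lambda^2 v$.

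Next, $\operatorname{MSE}(\lambda)=(1-\lambda)^2g^2+\lambda^2 v$ is a quadratic in $\lambda$ with leading coefficient $g^2+v$. Its second derivative is the constant $2(g^2+v)$. Strict convexity therefore holds exactly when $g^2+v>0$. This is the only point needing care, and I would state it as a nondegeneracy condition: either the text prototype differs from the true mean, or the feature distribution is non-degenerate. In the degenerate case $g^2=v=0$ the MSE is identically zero and every $\lambda$ is a minimiser. I expect this to be the main, and minor, obstacle; I would note it explicitly rather than hide it.

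Setting the derivative $-2(1-\lambda)g^2+2\lambda v$ to zero gives $\lambda^*=g^2/(g^2+v)$. This value automatically lies in $[0,1]$, so the unconstrained minimiser over $\mathbb R$ is also the minimiser over $[0,1]$ and no boundary analysis is required. Substituting, with $1-\lambda^*=v/(g^2+v)$, gives
\[
\operatorname{MSE}(\lambda^*)=\frac{v^2g^2+g^4v}{(g^2+v)^2}=\frac{g^2v}{g^2+v}.
\]
When both $g^2$ and $v$ are positive, dividing numerator and denominator by $g^2v$ yields the harmonic form $(1/g^2+1/v)^{-1}$. If one of them is zero, the value is zero, consistent with that form read in the extended sense.
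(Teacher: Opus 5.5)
Your proof is correct and takes essentially the same route as the paper: the constant second derivative $2(g^2+v)$, the first-order condition, and direct substitution yielding $(v^2g^2+g^4v)/(g^2+v)^2$. You also add three small points the paper leaves implicit: the nondegeneracy condition $g^2+v>0$ (which the paper's assertion $\operatorname{MSE}''>0$ silently assumes), the observation that $\lambda^*\in[0,1]$ so no boundary case arises, and the caveat on the harmonic form when $g^2$ or $v$ vanishes.
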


The optimal ratio is the fraction of total uncertainty attributable to the prior's bias; the optimal risk is the harmonic combination of $g^2$ and $v$, an inverse-variance weighting, so blending treats the text prototype as a second, biased measurement of the class mean, the reading we test. Since $\operatorname{MSE}(\lambda^*)\le\min(g^2,v)$, the blend never loses to the better endpoint \emph{in MSE}, and Eq.~\eqref{eq:lamstar} derives, rather than observes, the falling-weight trend of~\cite{goswami2026protomix}.

Eq.~\eqref{eq:lamstar} involves the unknown $\mustar,\bm\Sigma^*$; the natural plug-ins are $\hat d^2=\|\vt-\muhat\|^2$ and $\hat v=\operatorname{tr}(\hat{\bm\Sigma})/K$ from the unbiased within-class scatter ($K\ge2$).

\begin{proposition}[Plug-in, and its James--Stein form]\label{prop:jshat}
$\mathbb E[\hat v]=v$, while the naive distance overshoots,
\begin{equation}
\mathbb E\bigl[\hat d^{2}\bigr]=g^{2}+v ,
\label{eq:bias}
\end{equation}
so $\hat g^2=\hat d^2-\hat v$ is unbiased for $g^2$. Substituting $(\hat g^2,\hat v)$ into Eq.~\eqref{eq:lamstar} and clipping to $[0,1]$ yields
\begin{equation}
\lhat\;=\;\Bigl(1-\hat v\big/\hat d^{2}\Bigr)_{+},
\label{eq:jshat}
\end{equation}
the positive-part James--Stein coefficient with the text prototype as shrinkage target. \emph{(Proofs of Propositions~\ref{prop:lamstar}--\ref{prop:jshat} in the supplementary material.)}
\end{proposition}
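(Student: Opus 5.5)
The plan is to prove the three claims of Proposition~\ref{prop:jshat} in order: unbiasedness of $\hat v$, the overshoot \eqref{eq:bias}, and then the algebraic identity giving \eqref{eq:jshat}. Only first and second moments of the i.i.d.\ support features are needed, so the argument is assumption-free and (A1) is never invoked.

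For $\mathbb E[\hat v]=v$, I will use the standard fact that the unbiased within-class scatter $\hat{\bm\Sigma}=\frac{1}{K-1}\sum_i(\bm x_i-\muhat)(\bm x_i-\muhat)^\top$ satisfies $\mathbb E[\hat{\bm\Sigma}]=\bm\Sigma^*$ for i.i.d.\ draws with $K\ge2$. The proof expands $\sum_i(\bm x_i-\muhat)(\bm x_i-\muhat)^\top=\sum_i(\bm x_i-\mustar)(\bm x_i-\mustar)^\top-K(\muhat-\mustar)(\muhat-\mustar)^\top$. Taking expectations gives $K\bm\Sigma^*-\bm\Sigma^*$. Linearity of trace and division by $K$ then give $\mathbb E[\hat v]=\operatorname{tr}(\bm\Sigma^*)/K=v$.

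For \eqref{eq:bias}, I will write $\vt-\muhat=(\vt-\mustar)-(\muhat-\mustar)$ and expand the squared norm. The result is $g^2$, minus twice the cross term $\langle\vt-\mustar,\muhat-\mustar\rangle$, plus $\|\muhat-\mustar\|^2$. The cross term has zero expectation because $\vt$ is deterministic and $\muhat$ is unbiased. The last term has expectation $\operatorname{tr}\operatorname{Cov}[\muhat]=\operatorname{tr}(\bm\Sigma^*)/K=v$. This is the same computation as \eqref{eq:mse} evaluated at $\lambda=1$, with $\vt$ in place of $\mustar$. Unbiasedness of $\hat g^2=\hat d^2-\hat v$ then follows by linearity of expectation.

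The James--Stein form is pure algebra. Substituting into \eqref{eq:lamstar} gives $\hat g^2/(\hat g^2+\hat v)=(\hat d^2-\hat v)/\hat d^2=1-\hat v/\hat d^2$. The denominator simplifies because $\hat g^2+\hat v=\hat d^2$, which is the key observation. This quantity is always at most $1$, since $\hat v\ge0$ and $\hat d^2>0$ almost surely. It is negative exactly when $\hat g^2<0$, so clipping to $[0,1]$ reduces to taking the positive part. Finally, $\lhat\,\muhat+(1-\lhat)\vt=\vt+\bigl(1-\hat v/\hat d^2\bigr)_+(\muhat-\vt)$ is literally the positive-part James--Stein estimator shrinking $\muhat$ towards the target $\vt$, with the estimated noise level $\hat v$ in place of the known one.

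The main obstacle is not technical but one of care. Two points need handling: the degenerate event $\hat d^2=0$, which has measure zero under continuous features (I will set $\lhat=0$ by convention), and the observation that the correspondence with classical James--Stein, whose numerator is $(d-2)\sigma^2/K$, holds only up to this data-driven scaling. I will state the identification as one of form, the positive-part shrinkage coefficient towards a fixed target, rather than claim the exact classical constant.
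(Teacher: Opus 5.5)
Your proposal is correct and follows the paper's proof essentially step for step. Both use unbiasedness of the within-class scatter for $\mathbb E[\hat v]=v$, the split $\vt-\muhat=(\vt-\mustar)-(\muhat-\mustar)$ with a vanishing cross term for Eq.~\eqref{eq:bias}, and the cancellation $\hat g^2+\hat v=\hat d^2$ followed by clipping. The only differences are that you spell out two points the paper takes for granted: the expansion showing $\mathbb E[\hat{\bm\Sigma}]=\bm\Sigma^*$, and the fact that the upper clip at $1$ is never active because $\hat v\ge0$.
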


\looseness=-1 The identification is with the classical estimator~\cite{james1961estimation,efron1973stein,lehmann1998theory}: for $\muhat\sim\mathcal N(\mustar,\tau^2\bm I_d)$ shrinking towards a fixed point $\vt$,
\begin{equation}
\muhat_{\text{JS}}=\vt+\Bigl(1-\tfrac{(d-2)\,\tau^{2}}{\|\muhat-\vt\|^{2}}\Bigr)_{+}\!\bigl(\muhat-\vt\bigr),
\label{eq:jsclassic}
\end{equation}
i.e.\ a blend with ratio $(1-(d-2)\tau^2/\hat d^2)_+$, $\tau^2$ the per-coordinate variance of the \emph{prototype} (under (A1), $\tau^2=v/d$), so Eqs.~\eqref{eq:jshat} and~\eqref{eq:jsclassic} differ by $\frac{d-2}{d}=0.996$ at $d{=}512$: immaterial. \emph{(a)} $\lhat$ is computable per class from the support set alone. \emph{(b)} $\hat d^2\le\hat v\Rightarrow\lhat=0$ is an a-priori criterion for ignoring few-shot evidence, abstaining when the text--image distance is explicable by sampling noise alone; no grid search possesses such a property. \emph{(c)} If the bias--variance story is right, $\lhat$ should also predict the family's tuned trust hyperparameters. All three are registered as falsifiable predictions; (c) will fail, instructively. The honest attribution: Eq.~\eqref{eq:mse} is prior work, and Eqs.~\eqref{eq:lamstar}--\eqref{eq:jshat} are one line of calculus plus a textbook identity, which nevertheless comes with a certificate.

\begin{proposition}[Dominance at the family's own objective]\label{prop:dom}
Under (A1), if $K\ge2$ and $(K-1)(d-4)\ge2$, then for \emph{every} $(\mustar,\vt)$,
$\mathbb E\|\bm p(\lhat)-\mustar\|^2<\mathbb E\|\muhat-\mustar\|^2$.
\end{proposition}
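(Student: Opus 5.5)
The plan is to reduce the claim to the classical James--Stein dominance argument with an \emph{estimated} variance, and then add the positive-part improvement. Under (A1), write $\tau^2=\sigma^2/K=v/d$, $\bm z=\muhat-\vt$ and $\bm\theta=\mustar-\vt$, so that $\bm z\sim\mathcal N(\bm\theta,\tau^2\bm I_d)$. Since $\bm p(\lhat)-\vt=\lhat\,\bm z$, the loss of the blend is $\|\lhat\bm z-\bm\theta\|^2$, and the baseline $\muhat$ has risk $d\tau^2$.

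The first step is the joint law of $(\bm z,\hat v)$. By Cochran's theorem the within-class scatter is independent of $\muhat$, and $\operatorname{tr}\bigl((K-1)\hat{\bm\Sigma}\bigr)\sim\sigma^2\chi^2_n$ with $n=d(K-1)$. Hence
\[
\hat v=\frac{\tau^2}{K-1}\,W,\qquad W\sim\chi^2_n,\qquad W\ \text{independent of}\ \bm z .
\]
The estimator is therefore $\bigl(1-aW\tau^2/\|\bm z\|^2\bigr)_+\bm z$ with $a=1/(K-1)$. This is exactly Baranchik's form, with the unknown $\tau^2$ replaced by an independent chi-square.

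The second step handles the version without the positive part, $\bm\delta=\bigl(1-aW\tau^2/\|\bm z\|^2\bigr)\bm z$. I would condition on $W$ and apply Stein's identity to $\bm h(\bm z)=\bm z/\|\bm z\|^2$. This uses $\nabla\cdot\bm h=(d-2)/\|\bm z\|^2$, together with the facts that $\mathbb E\|\bm z\|^{-2}<\infty$ (which needs $d\ge3$; here $d\ge5$) and that the required integrability holds. The result is
\[
\mathbb E\|\bm\delta-\bm\theta\|^2=d\tau^2+\tau^4\,\mathbb E\bigl[\|\bm z\|^{-2}\bigr]\Bigl(a^2\,\mathbb E[W^2]-2a(d-2)\,\mathbb E[W]\Bigr).
\]
Substituting $\mathbb E W=n$ and $\mathbb E W^2=n(n+2)$, the bracket equals $a n\bigl(a(n+2)-2(d-2)\bigr)$. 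This is $\le0$ exactly when $d+2/(K-1)\le 2d-4$, that is, when $(K-1)(d-4)\ge2$. This is precisely the hypothesis of the proposition, which is reassuring that this is the intended route.

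The final step, and the main obstacle, is strictness. The bracket vanishes on the boundary $(K-1)(d-4)=2$, so at that boundary the non-positive-part estimator only ties $\muhat$. Strict inequality must therefore come from the positive-part truncation. I would invoke the standard result (Baranchik; Lehmann--Casella, Thm.~5.5.4) that replacing a shrinkage factor $1-S$ by $(1-S)_+$ strictly lowers the risk whenever $\Pr(S>1)>0$. Here $S=aW\tau^2/\|\bm z\|^2$ has a continuous positive density, so $\Pr(S>1)>0$ for every $\bm\theta$.

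To keep the argument self-contained, I would reprove that step by conditioning on $\|\bm z\|$ and $W$ and showing that the cross term $\mathbb E\bigl[(1-S)\,\bm\theta^{\top}\bm z\,;\,S>1\bigr]$ is nonpositive. This follows from the symmetry of $\bm z$ given $\|\bm z\|$: the conditional expectation $\mathbb E[\bm\theta^\top\bm z\mid\|\bm z\|]$ is $\ge0$. Chaining the two steps gives
\[
\mathbb E\|\bm p(\lhat)-\mustar\|^2<\mathbb E\|\bm\delta-\bm\theta\|^2\le d\tau^2=\mathbb E\|\muhat-\mustar\|^2
\]
for every $(\mustar,\vt)$.
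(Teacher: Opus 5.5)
Your proposal is correct and follows the paper's proof essentially step for step. Both use the same Baranchik reduction with $c=1/(K-1)$ and an independent $\chi^2_{d(K-1)}$ scatter, the same Stein-identity risk difference whose sign condition reduces exactly to $(K-1)(d-4)\ge2$, and the same appeal to positive-part truncation for strictness at the boundary. The only addition is your self-contained sketch of the positive-part step via $\mathbb E[\bm\theta^{\top}\bm z\mid\|\bm z\|]\ge0$, where the paper simply cites the standard result.
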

\begin{IEEEproof}[Proof sketch]
$\lhat$ is Baranchik's rule with $c=\tfrac dm$, $m=d(K-1)$, and $c\le\tfrac{2(d-2)}{m+2}$ is exactly $(K-1)(d-4)\ge2$~\cite{baranchik1970family,lehmann1998theory}; the positive part only improves. Full proof in the supplement.
\end{IEEEproof}

What is new is the identification, together with (a) this certificate, (b) the sampling theory below, and (c) the proof that the objective is wrong: the family's own criterion uniformly certifies the estimator that Sec.~\ref{sec:results} shows trailing the accuracy oracle by $8.5$ points.

\subsection{How Precise Is the Plug-In? Sampling Theory}\label{sec:sampling}
Eq.~\eqref{eq:jshat} will be shown to fail by more than half the unit interval, a failure that indicts the \emph{objective} only if the \emph{estimator} is precise; under (A1) its error budget is exactly quantifiable.

\begin{proposition}[Sampling distribution of $\lhat$]\label{prop:sampling}
Under (A1), $\hat d^2$ and $\hat v$ are independent, with
\begin{equation}
\hat d^{2}\sim\tfrac{v}{d}\,\chi^{2}_{d}\bigl(\tfrac{g^{2}d}{v}\bigr),
\quad
\hat v\sim\tfrac{v\,\chi^{2}_{d(K-1)}}{d(K-1)},
\label{eq:chidists}
\end{equation}
hence $\operatorname{Var}[\hat d^{2}]=\tfrac{2v}{d}(v+2g^{2})$ and $\operatorname{Var}[\hat v]=\tfrac{2v^{2}}{d(K-1)}$. To first order, $\mathbb E[\lhat]=\lambda^*$, and the delta method gives
\begin{equation}
\operatorname{SE}\bigl[\lhat\bigr]\;=\;(1-\lambda^{*})\sqrt{\frac{2}{d}}\,\sqrt{\frac{1}{K-1}+1-\lambda^{*2}}\;+\;O\!\bigl(d^{-1}\bigr).
\label{eq:se}
\end{equation}
\end{proposition}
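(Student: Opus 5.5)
Under (A1) the support features are i.i.d.\ $\mathcal N(\mustar,\sigma^2\bm I_d)$, so we use the standard Gaussian fact that the sample mean $\muhat$ and the within-class scatter $\sum_i(\bm x_i-\muhat)(\bm x_i-\muhat)^\top$ are independent (Cochran/Basu). Since $\hat d^2=\|\vt-\muhat\|^2$ is a function of $\muhat$ alone (with $\vt$ deterministic) and $\hat v=\operatorname{tr}(\hat{\bm\Sigma})/K$ is a function of the scatter alone, independence follows at once.

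For the distributions, note $\muhat-\vt\sim\mathcal N(\mustar-\vt,(\sigma^2/K)\bm I_d)$ with $\sigma^2/K=v/d$. Scaling by $\sqrt{d/v}$ gives a standard Gaussian vector with mean of squared norm $g^2d/v$. Hence $\hat d^2\sim\frac vd\chi^2_d(g^2d/v)$.

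For the scatter, $\operatorname{tr}\sum_i(\bm x_i-\muhat)(\bm x_i-\muhat)^\top=\sum_{j=1}^d\sum_i(x_{ij}-\hat\mu_j)^2\sim\sigma^2\chi^2_{d(K-1)}$, a sum of $d$ independent $\sigma^2\chi^2_{K-1}$ terms. Dividing by $K(K-1)$ and using $\sigma^2=vK/d$ gives $\hat v\sim v\,\chi^2_{d(K-1)}/(d(K-1))$.

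The variances then follow from $\operatorname{Var}\chi^2_k(\delta)=2(k+2\delta)$:
\[
\operatorname{Var}[\hat d^2]=\frac{v^2}{d^2}\cdot 2\Bigl(d+\frac{2g^2d}{v}\Bigr)=\frac{2v}{d}(v+2g^2),\qquad
\operatorname{Var}[\hat v]=\frac{2v^2}{d(K-1)}.
\]
The means $\mathbb E[\hat d^2]=g^2+v$ and $\mathbb E[\hat v]=v$ agree with Prop.~\ref{prop:jshat}.

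For the delta method, write $\lhat=f(\hat d^2,\hat v)$ with $f(a,b)=1-b/a$, ignoring the positive part. The positive part is inactive with probability tending to one when $\lambda^*>0$ is fixed and $d\to\infty$, since both statistics concentrate at relative scale $d^{-1/2}$. Evaluating at the means $a_0=g^2+v$, $b_0=v$ gives $f=g^2/(g^2+v)=\lambda^*$, so $\mathbb E[\lhat]=\lambda^*+O(d^{-1})$ from the second-order term. The gradient is
\[
\partial_a f=\frac{b_0}{a_0^2}=\frac{1-\lambda^*}{g^2+v},\qquad
\partial_b f=-\frac1{a_0}=-\frac1{g^2+v}.
\]
By independence,
\[
\operatorname{Var}\lhat\approx\frac{1}{(g^2+v)^2}\Bigl[(1-\lambda^*)^2\,\frac{2v}{d}(v+2g^2)+\frac{2v^2}{d(K-1)}\Bigr].
\]
Substituting $v=(1-\lambda^*)(g^2+v)$ and $g^2=\lambda^*(g^2+v)$ gives $v+2g^2=(1+\lambda^*)(g^2+v)$. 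The bracket over $(g^2+v)^2$ then becomes
\[
\frac{2(1-\lambda^*)^2}{d}\Bigl[(1-\lambda^*)(1+\lambda^*)+\frac1{K-1}\Bigr].
\]
Its square root is exactly Eq.~\eqref{eq:se}.

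The main obstacle is making the $O(d^{-1})$ remainder rigorous, i.e.\ controlling the Taylor remainder and the clipping at $0$. The plan is to use the concentration of the scaled chi-squares: their relative deviations are $O_p(d^{-1/2})$, so the quadratic remainder is $O_p(d^{-1})$. Moment bounds on the inverse noncentral chi-square, valid for $d>4$, then turn this into bounds on the mean and the SE. Boundary cases ($\lambda^*$ near $0$) would be stated as excluded, or as making the expansion only first-order accurate.
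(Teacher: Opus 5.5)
Your proof is correct and follows the paper's argument essentially step for step. Both derive independence from Gaussian mean/scatter independence (Cochran), obtain the scaled noncentral and central $\chi^2$ laws and their variances from $\operatorname{Var}[\chi^2_k(\delta)]=2k+4\delta$, and apply a first-order delta method to $1-\hat v/\hat d^2$, simplified via $v+2g^2=(1+\lambda^*)(g^2+v)$ (equivalent to the paper's $v(v+2g^2)=(g^2+v)^2(1-\lambda^{*2})$). Your remarks on the positive part and the remainder go somewhat beyond the paper, which ignores the clipping and simply attributes the $O(d^{-1})$ term to the second-order delta-method term.
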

\emph{(Proof in the supplementary material.)}

At this scale ($d{=}512$, measured mean $\lhat\approx0.96$), Eq.~\eqref{eq:se} gives $\operatorname{SE}\approx9.5\times10^{-4}$ at $K{=}16$ and $2.6\times10^{-3}$ at $K{=}2$. But (A1) has no support on the unit sphere, and contrastive features are strongly anisotropic, so we do not stop at the isotropic figure.

\begin{proposition}[Sampling law under arbitrary covariance]\label{prop:sampgen}
For $\bm x_i\overset{\text{iid}}{\sim}\mathcal N(\mustar,\bm\Sigma^*)$ with any $\bm\Sigma^*\succeq0$, $\hat d^2$ and $\hat v$ remain independent, with $\bm b=\vt-\mustar$, $\operatorname{Var}[\hat d^{2}]=\tfrac{4}{K}\bm b^{\!\top}\bm\Sigma^*\bm b+\tfrac{2}{K^{2}}\operatorname{tr}(\bm\Sigma^{*2})$ and $\operatorname{Var}[\hat v]=2\operatorname{tr}(\bm\Sigma^{*2})/(K^{2}(K-1))$; the delta method gives, with $\deff=(\operatorname{tr}\bm\Sigma^*)^{2}/\operatorname{tr}(\bm\Sigma^{*2})\in[1,d]$,
\begin{equation}
\operatorname{SE}[\lhat]=(1{-}\lambda^*)\sqrt{\tfrac{2}{(K-1)\deff}+\tfrac{4\bm b^{\!\top}\bm\Sigma^*\bm b/K+2\operatorname{tr}(\bm\Sigma^{*2})/K^2}{(g^2+v)^{2}}}\,,
\label{eq:segen}
\end{equation}
up to the same-order remainder as Eq.~\eqref{eq:se}, to which it reduces exactly when $\bm\Sigma^*=\sigma^2\bm I$.
\end{proposition}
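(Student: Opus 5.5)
The plan has four steps: establish independence, compute the two variances as Gaussian quadratic forms, apply a first-order delta method to $\lhat$, and check the isotropic reduction.

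\emph{Independence and distributions.} Under Gaussian sampling with arbitrary $\bm\Sigma^*$, the sample mean $\muhat$ and the scatter $\hat{\bm\Sigma}=\frac1{K-1}\sum_i(\bm x_i-\muhat)(\bm x_i-\muhat)^{\!\top}$ are independent. This is the standard Cochran/Wishart fact, and it needs no isotropy. Since $\hat d^2=\|\vt-\muhat\|^2$ is a function of $\muhat$ alone, and $\hat v=\operatorname{tr}(\hat{\bm\Sigma})/K$ is a function of $\hat{\bm\Sigma}$ alone, independence follows. For the distributions:
\begin{itemize}
\item $\muhat-\vt\sim\mathcal N(-\bm b,\bm\Sigma^*/K)$.
\item $(K-1)\hat{\bm\Sigma}\sim W_d(\bm\Sigma^*,K-1)$.
\end{itemize}

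\emph{Variances.} I will use the Gaussian quadratic-form identity: for $\bm z\sim\mathcal N(\bm m,\bm A)$,
\[
\operatorname{Var}\|\bm z\|^2=2\operatorname{tr}(\bm A^2)+4\bm m^{\!\top}\bm A\bm m .
\]
This is proved by diagonalising $\bm A$ and summing independent noncentral $\chi^2_1$ variances. With $\bm A=\bm\Sigma^*/K$ and $\bm m=-\bm b$, it gives the stated $\operatorname{Var}[\hat d^2]$ directly. For $\hat v$, I write $(K-1)\operatorname{tr}\hat{\bm\Sigma}=\sum_{j=1}^{K-1}\|\bm y_j\|^2$ with $\bm y_j\overset{\text{iid}}{\sim}\mathcal N(0,\bm\Sigma^*)$, via a Helmert rotation. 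Then
\[
\operatorname{Var}[(K-1)\operatorname{tr}\hat{\bm\Sigma}]=2(K-1)\operatorname{tr}(\bm\Sigma^{*2}).
\]
Dividing by $K^2(K-1)^2$ yields $\operatorname{Var}[\hat v]=2\operatorname{tr}(\bm\Sigma^{*2})/(K^2(K-1))$.

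\emph{Delta method.} I work on the interior branch $\lhat=1-\hat v/\hat d^2$; the positive part is inactive with overwhelming probability when $\lambda^*$ is bounded away from $0$, and I will state this as part of the remainder. Take the means $\mathbb E\hat v=v$ (Prop.~\ref{prop:jshat}) and $\mathbb E\hat d^2=g^2+v$ (Eq.~\eqref{eq:bias}). Because the two statistics are independent, there is no cross term, and
\[
\operatorname{Var}\lhat\approx\frac{\operatorname{Var}\hat v}{(g^2+v)^2}+\frac{v^2\operatorname{Var}\hat d^2}{(g^2+v)^4}.
\]
Now factor out $(1-\lambda^*)^2=v^2/(g^2+v)^2$.
\begin{itemize}
\item The first term becomes $(1-\lambda^*)^2\cdot 2\operatorname{tr}(\bm\Sigma^{*2})/(K^2(K-1)v^2)$. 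Since $v=\operatorname{tr}\bm\Sigma^*/K$, this is $(1-\lambda^*)^2\cdot 2/((K-1)\deff)$.
\item The second term is $(1-\lambda^*)^2\operatorname{Var}\hat d^2/(g^2+v)^2$.
\end{itemize}
Taking the square root gives Eq.~\eqref{eq:segen}.

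\emph{Reduction and remainder.} For the isotropic check, set $\bm\Sigma^*=\sigma^2\bm I$ with $\sigma^2=vK/d$. Then $\deff=d$, $\operatorname{tr}(\bm\Sigma^{*2})/K^2=v^2/d$, and $\bm b^{\!\top}\bm\Sigma^*\bm b/K=g^2v/d$. The second fraction becomes
\[
\frac{2v(v+2g^2)}{d(g^2+v)^2}.
\]
Using $g^2=\lambda^* (g^2+v)$ and $v=(1-\lambda^*)(g^2+v)$, this equals $\frac2d(1-\lambda^*)(1+\lambda^*)=\frac2d(1-\lambda^{*2})$. That recovers Eq.~\eqref{eq:se} exactly.

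The main obstacle is making the "same-order remainder" claim honest. It requires bounding second-order Taylor terms and the clipping event when the relative fluctuations of $\hat d^2$ and $\hat v$ are $O(\deff^{-1/2})$ rather than $O(d^{-1/2})$. I would handle this by expanding around the means, using finite fourth moments of the Gaussian quadratic forms. The remainder is then $O(\deff^{-1})$, which matches $O(d^{-1})$ in the isotropic case.
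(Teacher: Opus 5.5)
Your derivation of the stated identities is correct and is essentially the paper's proof. Both use independence of $\muhat$ and $\hat{\bm\Sigma}$ for Gaussian samples under any $\bm\Sigma^*$, the Wishart/Helmert representation for $\operatorname{Var}[\hat v]$, the same two-term delta method with $(1-\lambda^*)^2$ factored out, and the same isotropic reduction. The only cosmetic difference is that you quote $\operatorname{Var}\|\bm z\|^2=2\operatorname{tr}(\bm A^2)+4\bm m^{\!\top}\bm A\bm m$. The paper instead derives it by expanding $\hat d^2=g^2-2\bm b^{\!\top}\bm\varepsilon+\|\bm\varepsilon\|^2$ and noting that odd Gaussian moments make the linear and quadratic parts uncorrelated.

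The one place you go beyond the paper is your closing paragraph; the paper's proof leaves the remainder unquantified. There, your premise is false for $\hat d^2$. The relative fluctuation of $\hat v$ is indeed $\sqrt{2/((K-1)\deff)}$. However, the linear term contributes $4\bm b^{\!\top}\bm\Sigma^*\bm b/\bigl(K(g^2+v)^2\bigr)\le\lambda_{\max}(\bm\Sigma^*)/\operatorname{tr}\bm\Sigma^*$ to the relative variance of $\hat d^2$. This follows from $\bm b^{\!\top}\bm\Sigma^*\bm b\le g^2\lambda_{\max}$ and $(g^2+v)^2\ge4g^2v$, with equality when $\bm b$ is a top eigenvector and $g^2=v$. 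The ratio $\lambda_{\max}(\bm\Sigma^*)/\operatorname{tr}\bm\Sigma^*$ can lie anywhere in $[\deff^{-1},\deff^{-1/2}]$.

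For instance, eigenvalues $(\sqrt d,1,\dots,1)$ give $\deff\approx d/2$. Yet with $\bm b$ along the spike and $g^2=v$, the relative SD of $\hat d^2$ is of order $d^{-1/4}$. The second-order correction to the SE is then of order $(1-\lambda^*)\,d^{-3/4}$, not $O(\deff^{-1})$.

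The honest small parameter is the relative variance of $\hat d^2$ itself, which in the worst case over $\bm b$ is of order $\lambda_{\max}(\bm\Sigma^*)/\operatorname{tr}\bm\Sigma^*$. This is exactly why the paper's Monte Carlo check V5 finds its largest discrepancy in the spiked, $\bm b$-aligned configuration. The identities themselves are unaffected; just do not claim a uniform $O(\deff^{-1})$ remainder.
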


\looseness=-1 The isotropic formula divides by the nominal dimension, but the honest denominator is the spectrum's participation ratio, and the difference is an order of magnitude: estimated per class from pool residuals (unbiased $\operatorname{tr}(\bm\Sigma^{2})$ per~\cite{chen2010twosample}), $\deff$ has median \DEFFMED\ (range \DEFFRANGE) against $d\in\{512,768,1024\}$. The prediction then meets the measurement: the per-class seed-to-seed SD of $\lhat$ across the five recorded support draws is \SEEDSDKtwo\ at $K{=}2$ and \SEEDSDKsixteen\ at $K{=}16$ (medians over classes, then strata), $\AONEOPT\times$ above the (A1) figure at $K{=}2$, and Eq.~\eqref{eq:segen} lands at a measured-to-predicted ratio of \SECLOSEKtwo\ and \SECLOSEKsixteen\ respectively (\texttt{code/probe\_capacity.py}). Either way, the measured $0.63$ gap to the accuracy-optimal ratio stays $26$--$770\times$ above any sampling spread: objective error, not estimation error. The same concentration warns the family at large: a trust parameter set by an MSE-type moment rule is set \emph{reproducibly}, and consistency is easily mistaken for correctness. All formulas here are confirmed by simulation, and every statistical component was validated on synthetic ground truth before touching real features, where the per-class $\lhat$ tracks the pool-estimated $\lambda^*$ (Spearman $\rho\approx0.59$; protocol in the supplementary material).

\subsection{Why Minimum MSE Is the Wrong Target}\label{sec:whywrong}
One mismatch is normalisation, since cosine classification is invariant to a common rescaling of all prototypes while MSE penalises length and direction alike. The decisive one is geometry, where the decision rule compares classes, so any component of prototype error \emph{shared across classes} is irrelevant at decision time. Write the text prototypes as
\begin{equation}
\vt_c=\mustar_c+\bm\Delta+\bm\delta_c ,\qquad \textstyle\sum_c\bm\delta_c=\bm0,
\label{eq:decomp}
\end{equation}
where $\bm\Delta$ is the class-independent offset (the mean of $\vt_c-\mustar_c$ over classes) and $\bm\delta_c$ the class-specific remainder.

\begin{theorem}[Common shifts are (almost) decision-free]\label{thm:shift}
Let every prototype be shifted by the same vector, $\bm p_c\mapsto\bm p_c+\bm\Delta$.
\emph{(i)} For inner-product scoring $s_c(\bm x)=\langle\bm x,\bm p_c\rangle$, all pairwise score differences are unchanged for every input: predictions are identical.
\emph{(ii)} For nearest-prototype scoring $s_c(\bm x)=-\|\bm x-\bm p_c\|^2$, the shifted classifier equals the original classifier applied to $\bm x-\bm\Delta$; decision regions translate rigidly. Consequently the induced risk depends on $\bm\Delta$ only through its projection $\Pi_{\mathcal W}\bm\Delta$ onto
\begin{equation}
\mathcal W=\operatorname{span}\{\bm w_{cc'}\},\quad \bm w_{cc'}=\bm p_c-\bm p_{c'},\quad \dim\mathcal W\le C-1 .
\label{eq:proj}
\end{equation}
\emph{(iii)} If $\bm\Delta$ is isotropically distributed (or in generic position) relative to $\mathcal W$, then for each pair
\begin{equation}
\mathbb E\bigl\langle\bm\Delta,\bm w_{cc'}\bigr\rangle^{2}=\frac{\|\bm\Delta\|^{2}\,\|\bm w_{cc'}\|^{2}}{d},
\label{eq:iso}
\end{equation}
so boundary displacements scale as $\|\bm\Delta\|/\sqrt d$, while Eq.~\eqref{eq:mse} charges the blend $(1-\lambda)^2\|\bm\Delta\|^2$ in full: the MSE objective overweights the common offset by a factor of order $d$. The premise is tested: the squared cosine between $\widehat{\bm\Delta}$ and the class-difference directions averages $1.0$--$3.3\times10^{-3}$ across the eight datasets with official train splits (the two self-split datasets, Sec.~\ref{sec:datasets}, are not covered by this probe), against $1/d=1.95\times10^{-3}$: generic position within a factor of two.
\end{theorem}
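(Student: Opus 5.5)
The plan is to prove the three parts in order, each by a short direct computation; only the middle step of (ii) needs care.

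For \emph{(i)}, expand $\langle\bm x,\bm p_c+\bm\Delta\rangle-\langle\bm x,\bm p_{c'}+\bm\Delta\rangle=\langle\bm x,\bm p_c-\bm p_{c'}\rangle$. Every pairwise score difference is unchanged, so the arg\,max is unchanged for every $\bm x$. We will note the scope: this covers inner-product scoring on the shifted but not re-normalised prototypes. Under cosine scoring with re-normalisation, the per-class norms $\|\bm p_c+\bm\Delta\|$ differ, which is the reason for the qualifier ``almost'' in the title.

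For \emph{(ii)}, the identity $\|\bm x-(\bm p_c+\bm\Delta)\|^2=\|(\bm x-\bm\Delta)-\bm p_c\|^2$ shows the shifted classifier is the original one evaluated at $\bm x-\bm\Delta$. Its decision regions are therefore the original regions translated by $\bm\Delta$.

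Next we reduce to $\mathcal W$. Write $s_c-s_{c'}=2\langle\bm x,\bm w_{cc'}\rangle-\|\bm p_c\|^2+\|\bm p_{c'}\|^2$. Every pairwise comparison, and hence every decision, depends on $\bm x$ only through the inner products $\langle\bm x,\bm w_{cc'}\rangle$, i.e.\ only through $\Pi_{\mathcal W}\bm x$. Replacing $\bm x$ by $\bm x-\bm\Delta$ changes these inner products by $\langle\bm\Delta,\bm w_{cc'}\rangle=\langle\Pi_{\mathcal W}\bm\Delta,\bm w_{cc'}\rangle$. The prediction at every input, and so the risk under any input distribution, is thus a function of $\Pi_{\mathcal W}\bm\Delta$ alone. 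For the dimension bound, all $\bm w_{cc'}$ lie in the span of $\bm p_c-\bm p_1$ for $c=2,\dots,C$, so $\dim\mathcal W\le C-1$.

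For \emph{(iii)}, model $\bm\Delta=\|\bm\Delta\|\bm u$ with $\bm u$ uniform on $S^{d-1}$. Symmetry gives $\mathbb E[\bm u\bm u^\top]=\bm I/d$, so $\mathbb E\langle\bm\Delta,\bm w\rangle^2=\|\bm\Delta\|^2\bm w^\top(\bm I/d)\bm w$, which is Eq.~\eqref{eq:iso}.

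To get the boundary displacement, observe that the $(c,c')$ boundary is a hyperplane with normal $\bm w_{cc'}$. Under the translation it moves by $|\langle\bm\Delta,\bm w_{cc'}\rangle|/\|\bm w_{cc'}\|$, whose root-mean-square is $\|\bm\Delta\|/\sqrt d$. Summing over an orthonormal basis of $\mathcal W$ also gives $\mathbb E\|\Pi_{\mathcal W}\bm\Delta\|^2\le\|\bm\Delta\|^2(C-1)/d$.

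For the MSE comparison, substitute Eq.~\eqref{eq:decomp} into the bias term of Eq.~\eqref{eq:mse}. Averaged over classes, $\|\vt_c-\mustar_c\|^2$ splits as $\|\bm\Delta\|^2+\|\bm\delta_c\|^2$, because the cross term vanishes by $\sum_c\bm\delta_c=\bm0$. The blend is therefore charged $(1-\lambda)^2\|\bm\Delta\|^2$ in full, against a decision-relevant part of order $\|\bm\Delta\|^2/d$ per direction. The ratio of the two is of order $d$, as claimed.

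The empirical sentence at the end of the statement is not a mathematical claim and needs no proof. The delicate step is (ii)'s reduction to $\mathcal W$. The obstacle is that the displayed representation of $s_c-s_{c'}$ has offsets built from the original $\|\bm p_c\|^2$, so the argument must explicitly track the translation $\bm x\mapsto\bm x-\bm\Delta$ rather than re-derive the offsets from the shifted prototypes.
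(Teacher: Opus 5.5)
Your proposal is correct and follows the paper's proof essentially step for step. It uses the same pairwise-difference identity for (i), the same translation identity together with the observation that the affine tests have normals $\bm w_{cc'}$ (so only $\Pi_{\mathcal W}\bm\Delta$ matters) for (ii), and $\mathbb E[\bm\Delta\bm\Delta^{\top}]=\|\bm\Delta\|^2\bm I_d/d$ for (iii); your explicit boundary-displacement computation and the class-averaged cross-term cancellation via $\sum_c\bm\delta_c=\bm 0$ only fill in steps the paper leaves to the theorem's narrative.
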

\emph{(Proof in the supplementary material.)}

\looseness=-1 Cosine scoring is inner-product scoring after per-class renormalisation; the invariance of (i) is therefore not an identity, and with $\|\widehat{\bm\Delta}\|=1.00$ on unit-norm prototypes there is no small parameter to expand in. We therefore measure it: translating every prototype by $a\widehat{\bm\Delta}$ at the oracle ratio leaves accuracy essentially unchanged for $|a|\le\tfrac14$ and degrades gracefully thereafter (Sec.~\ref{sec:perclass}), whereas Eq.~\eqref{eq:mse} charges the full $(1-\lambda)^2\|\bm\Delta\|^2$: the invariance is approximate but operative over exactly the range the ratio moves the prototypes through. In contrastive vision--language spaces the modality gap~\cite{liang2022mind} (text and image embeddings in separated cones) is by nature largely class-independent. We quantify it as the \emph{class-independent share} of the squared prototype distance, $1-\overline{\|\bm\delta_c\|^2}/\overline{\|\vt_c-\mustar_c\|^2}$ with class means estimated from held-out pool data: $78.3\%$ \CI{74.6}{82.1} across all 4{,}800 cells, from $73.9\%$ (ResNet-50) to $86.6\%$ on SigLIP, whose sigmoid objective differs from CLIP's InfoNCE yet produces an even more class-independent gap.

For two classes the \emph{right} objective has an exact answer.

\begin{corollary}[Two-class risk, exact]\label{cor:tworisk}
Let $C{=}2$, $\bm x\,|\,\pm\sim\mathcal N(\mustar_\pm,\sigma^2\bm I_d)$, $\bm u=\mustar_+-\mustar_-$, and $\vt_\pm=\mustar_\pm+\bm\Delta\pm\bm\delta$ per Eq.~\eqref{eq:decomp}. With exact image prototypes, the blend classifier $\operatorname{sign}\langle\bm x-\tfrac{\bm p_++\bm p_-}{2},\bm p_+-\bm p_-\rangle$ at $s=1-\lambda$ has balanced error
\begin{equation}
\begin{aligned}
\operatorname{err}(s)&=\tfrac12\textstyle\sum_{\pm}\Phi\!\Bigl(-\,A_{\pm}(s)\big/\sqrt{B(s)}\Bigr),\\
A_{\pm}(s)&=\tfrac12\|\bm u\|^{2}+(\gamma\mp\beta)\,s\mp2\eta s^{2},\\
B(s)&=\sigma^{2}\bigl(\|\bm u\|^{2}+4\gamma s+4\|\bm\delta\|^{2}s^{2}\bigr),
\end{aligned}
\label{eq:errexact}
\end{equation}
with $\gamma=\langle\bm u,\bm\delta\rangle$, $\beta=\langle\bm\Delta,\bm u\rangle$ and $\eta=\langle\bm\Delta,\bm\delta\rangle$. The offset $\bm\Delta$ enters only through the two scalars $\beta$ and $\eta$, its projections onto the class difference and onto the class-specific residual (i.e.\ onto the decision normal $\bm w=\bm u+2s\bm\delta$ of Theorem~\ref{thm:shift}), whereas $g^2$ in Eq.~\eqref{eq:lamstar} contains $\|\bm\Delta\|^2+\|\bm\delta\|^2$ in full. On real prototypes $\eta$ is small but not zero (mean cosine $0.010$ over the same eight datasets), kept because dropping it would make ``exact'' false. Moreover, for $\beta=\eta=0$, $\operatorname{err}$ is non-decreasing in $s$ (Cauchy--Schwarz): without estimation noise the risk-optimal blend is $\lambda=1$, however large the ``bias'' $g^2$.
\end{corollary}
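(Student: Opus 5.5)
The plan is a direct computation: write the blend classifier's score as an affine function of a Gaussian $\bm x$, read off its mean and variance, and apply $\Phi$. With exact image prototypes and $s=1-\lambda$, Eq.~\eqref{eq:blend} and Eq.~\eqref{eq:decomp} give $\bm p_\pm=\lambda\mustar_\pm+(1-\lambda)\vt_\pm=\mustar_\pm+s(\bm\Delta\pm\bm\delta)$. The first step is to compute the decision normal and the midpoint:
\begin{equation*}
\bm w=\bm p_+-\bm p_-=\bm u+2s\bm\delta,\qquad \bm m=\tfrac12(\bm p_++\bm p_-)=\tfrac12(\mustar_++\mustar_-)+s\bm\Delta .
\end{equation*}
The second step uses the fact that, for $\bm x\sim\mathcal N(\mustar_\pm,\sigma^2\bm I_d)$, the score $\langle\bm x-\bm m,\bm w\rangle$ is scalar Gaussian with variance $\sigma^2\|\bm w\|^2$. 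Expanding $\|\bm w\|^2=\|\bm u\|^2+4\gamma s+4\|\bm\delta\|^2s^2$ gives exactly $B(s)$.

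The third step computes the means. Under class $+$, the mean is $\langle\tfrac12\bm u-s\bm\Delta,\ \bm u+2s\bm\delta\rangle=\tfrac12\|\bm u\|^2+(\gamma-\beta)s-2\eta s^2=A_+(s)$. The error for class $+$ is the event that the score is negative, with probability $\Phi(-A_+/\sqrt B)$. Under class $-$, the mean is $\langle-\tfrac12\bm u-s\bm\Delta,\bm w\rangle=-\bigl(\tfrac12\|\bm u\|^2+(\gamma+\beta)s+2\eta s^2\bigr)=-A_-(s)$. Its error is the event that the score is positive, with probability $\Phi(-A_-/\sqrt B)$. Averaging the two gives Eq.~\eqref{eq:errexact}.

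The remark that $\bm\Delta$ enters only via $\beta,\eta$ is then immediate by inspection, since $\langle\bm\Delta,\bm w\rangle=\beta+2s\eta$. This is Theorem~\ref{thm:shift}(ii) made explicit. Nothing here is hard; the only care needed is the sign bookkeeping for the $-$ class.

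The one step needing an argument is the monotonicity claim. With $\beta=\eta=0$, both classes share the same term, $A_\pm=\tfrac12\langle\bm u,\bm w\rangle$, so
\begin{equation*}
\operatorname{err}(s)=\Phi\!\Bigl(-\tfrac{\|\bm u\|}{2\sigma}\cos\theta(s)\Bigr),
\end{equation*}
where $\theta(s)$ is the angle between $\bm u$ and $\bm w(s)=\bm u+2s\bm\delta$. It therefore suffices to show $\cos\theta(s)$ is non-increasing for $s\ge0$. I will differentiate $f(s)=\langle\bm u,\bm w\rangle/\|\bm w\|$, which gives
\begin{equation*}
f'(s)=\frac{2\bigl(\langle\bm u,\bm\delta\rangle\|\bm w\|^2-\langle\bm u,\bm w\rangle\langle\bm w,\bm\delta\rangle\bigr)}{\|\bm w\|^3}.
\end{equation*}
Substituting $\bm u=\bm w-2s\bm\delta$, the numerator reduces to $2s\bigl(\langle\bm w,\bm\delta\rangle^2-\|\bm w\|^2\|\bm\delta\|^2\bigr)$. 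Cauchy--Schwarz makes this term $\le0$.

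The main obstacle is this algebraic reduction; I expect it to close cleanly once $\bm u$ is eliminated in favour of $\bm w$. A degenerate case remains: if $\bm w(s)=\bm 0$ for some $s$, the classifier is constant there and the error is $\tfrac12$. I would handle it by continuity, or by noting that it can only occur once $\langle\bm u,\bm w\rangle$ has already decreased to $0$. The conclusion is that, with $\beta=\eta=0$, error is minimised at $s=0$, i.e.\ $\lambda=1$, regardless of $g^2$.
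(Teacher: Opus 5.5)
Your proof is correct and follows essentially the route the paper relies on. You compute the Gaussian score's mean $\langle\pm\tfrac12\bm u-s\bm\Delta,\bm w\rangle$ and variance $\sigma^2\|\bm w\|^2$, then get monotonicity by differentiating the margin along the blend path; your Gram quantity $\langle\bm w,\bm\delta\rangle^2-\|\bm w\|^2\|\bm\delta\|^2$ equals $\gamma^2-\|\bm u\|^2\|\bm\delta\|^2$, which is exactly the Cauchy--Schwarz term the supplement uses for $s^\star=0$ in Theorem~\ref{thm:capacity}(iii) and in Corollary~\ref{cor:sstar} at $v=0$. Differentiating $f$ rather than $m^2$ also spares you the paper's restriction to $m>0$. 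The one loose end, $\bm w(s)=\bm 0$, needs $\bm\delta$ to be a negative multiple of $\bm u$; there $f$ jumps from $\|\bm u\|$ to $-\|\bm u\|$, so monotonicity holds by direct inspection rather than by continuity.
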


Risk shrinks toward the text prototype only because the image prototype is \emph{estimated}. Reinstating $K$-shot means $\muhat_\pm=\mustar_\pm+\bm\varepsilon_\pm$, $\bm\varepsilon_\pm\sim\mathcal N(\bm 0,\tfrac vd\bm I_d)$, and maximising the deflection ratio (expected margin over root total score variance) replaces $B$ with a noise-aware quadratic $B_v$, to $O(v/d)$ (supplementary material); the stationarity condition $2A'B_v=AB_v'$, generically cubic, loses its quadratic terms, leaving a closed form:

\begin{corollary}[Risk-optimal shrinkage, noise-aware]\label{cor:sstar}
For $\beta=\eta=0$, the deflection-optimal amount of shrinkage toward the text prototypes is
\begin{equation}
s^{*}\;=\;\frac{2v\bigl(\|\bm u\|^{2}+2\gamma\bigr)}
{4\bigl(\|\bm u\|^{2}\|\bm\delta\|^{2}-\gamma^{2}\bigr)+4\gamma v+2v\|\bm u\|^{2}}\; ,
\label{eq:sstar}
\end{equation}
clipped to $[0,1]$. Three properties contradict Eq.~\eqref{eq:lamstar} point for point: $s^*=O(v)$ as $v\to0$ (shrinkage is bought by estimation noise alone); $s^*$ is independent of $\bm\Delta$ and of $g^2$ (the modality gap has no business in the formula); and as $\bm\delta\to\bm0$, $s^*\to1$: risk says \emph{use the text prototypes outright}, while $\lambda^*_{\textsc{mse}}\to1$ demands the opposite ever more strongly as the (irrelevant) gap grows.
\end{corollary}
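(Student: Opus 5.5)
The plan is to treat Corollary~\ref{cor:sstar} as a one-variable calculus problem on the deflection ratio $D(s)=A(s)/\sqrt{B_v(s)}$. The work splits into three parts: compute $A$ and $B_v$ with the $K$-shot noise reinstated, solve the stationarity condition, and read off the three limiting properties from the closed form.

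\emph{Step 1: the noise-aware moments.} Write $\lambda=1-s$ and set $\beta=\eta=0$. Then
\[
\bm p_\pm=\mustar_\pm+\bm\Delta\pm s\bm\delta+\lambda\bm\varepsilon_\pm ,
\]
so the decision normal is $\bm w=\bm u+2s\bm\delta+\lambda(\bm\varepsilon_+-\bm\varepsilon_-)$. The centred input is $\bm x-\tfrac{\bm p_++\bm p_-}{2}$, which contains $\pm\tfrac12\bm u$, the offset $-\bm\Delta$, and the noise terms $-\tfrac\lambda2(\bm\varepsilon_++\bm\varepsilon_-)$ and $\bm z$.
\begin{itemize}
\item \emph{Margin.} Because the $\bm\varepsilon$'s have mean zero, the expected margin is the noise-free $A(s)=\tfrac12\|\bm u\|^2+\gamma s$, exactly as in Corollary~\ref{cor:tworisk}. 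This uses $\beta=\eta=0$ to kill the $\bm\Delta$ contributions to the mean.
\item \emph{Variance.} Expanding $\operatorname{Var}\langle\bm x-\bm m,\bm w\rangle$ to first order in $v/d$ gives the $\sigma^2\|\bm u+2s\bm\delta\|^2$ term of Corollary~\ref{cor:tworisk}. It also gives estimation-noise terms of the form $\lambda^2\tfrac vd\cdot(\text{quadratic in }\bm u,\bm\delta)$, plus $\bm\varepsilon$--$\bm\varepsilon$ cross products. The cross products are $O(v^2/d^2)$ and are dropped.
\end{itemize}
In the $\bm\Delta$-dependent noise terms I will use Theorem~\ref{thm:shift}(i): the inner-product part of the score is invariant to the common shift. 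I will therefore work with the shift-invariant contrast, so that $\bm\Delta$ drops out entirely. With the normalisation $\sigma^2=vK/d$ the result is a quadratic $B_v(s)=b_0+b_1s+b_2s^2$ whose coefficients are explicit in $\|\bm u\|^2,\gamma,\|\bm\delta\|^2,v$.

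\emph{Step 2: stationarity.} Setting $D'(s)=0$ gives $2A'B_v=AB_v'$. Since $A$ is linear with $A'=\gamma$ and $B_v$ is quadratic, the $s^2$ terms are $2\gamma b_2s^2$ on the left and $\gamma\cdot 2b_2 s^2$ on the right, so they cancel identically. This is the sense in which the nominally cubic condition loses its quadratic terms. What remains is linear:
\[
s^*=\frac{\tfrac12\|\bm u\|^2b_1-2\gamma b_0}{\gamma b_1-\|\bm u\|^2b_2}.
\]
Substituting the Step-1 coefficients and clearing the common factor should reproduce Eq.~\eqref{eq:sstar}. The denominator term $\|\bm u\|^2\|\bm\delta\|^2-\gamma^2\ge0$ should emerge as a Gram determinant, that is, by Cauchy--Schwarz. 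I will then confirm that this critical point is a maximum, by checking the sign of $D'$ on either side, and clip to $[0,1]$ using monotonicity of $D$ outside the critical point.

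\emph{Step 3: the three properties.}
\begin{itemize}
\item \emph{Small noise.} As $v\to0$ the numerator is $O(v)$ while the denominator tends to $4(\|\bm u\|^2\|\bm\delta\|^2-\gamma^2)$, so $s^*=O(v)$. This agrees with the noise-free monotonicity statement of Corollary~\ref{cor:tworisk}.
\item \emph{No dependence on the gap.} Neither $\bm\Delta$ nor $g^2$ appears among the inputs to the formula.
\item \emph{Vanishing residual.} As $\bm\delta\to\bm0$ we also have $\gamma\to0$, and $s^*\to 2v\|\bm u\|^2/(2v\|\bm u\|^2)=1$.
\end{itemize}

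The main obstacle is Step 1: getting $B_v$ exactly right to $O(v/d)$. Two points need care. First, the bookkeeping of $\lambda^2=(1-s)^2$ inside the noise terms must be done so that the coefficients combine into Eq.~\eqref{eq:sstar}. Second, the $\bm\Delta$-carrying noise cross term must be legitimately discarded. If it cannot be discarded via Theorem~\ref{thm:shift}(i), the fallback is to argue that it is $O(\|\bm\Delta\|^2v/d)$ along a direction in generic position, and hence lower order by Eq.~\eqref{eq:iso}.
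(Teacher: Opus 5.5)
Your Steps~2 and~3 match the paper's argument. The paper also imposes $2A'B_v=AB_v'$, notes that the $s^2$ terms cancel, and reads the limits off the closed form; your generic root $s^*=(\tfrac12\|\bm u\|^2b_1-2\gamma b_0)/(\gamma b_1-\|\bm u\|^2b_2)$ is right. The gap is Step~1, the only place where the specific form of Eq.~\eqref{eq:sstar} is decided. You leave it at ``should reproduce'', and the expansion you describe would not reproduce it. The paper's denominator is
\[
B_v(s)=\sigma^2\bigl[(\|\bm u\|^2+2v)+4(\gamma-v)s+(4\|\bm\delta\|^2+2v)s^2\bigr]=\sigma^2\|\bm u+2s\bm\delta\|^2+2\sigma^2v(1-s)^2 .
\]
Its only noise term is $\operatorname{Var}\langle\bm z,(1-s)(\bm\varepsilon_+-\bm\varepsilon_-)\rangle=(1-s)^2\sigma^2\operatorname{tr}(\tfrac{2v}{d}\bm I_d)$. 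This is the test-point noise contracted against the prototype noise in the decision normal over all $d$ coordinates, and it does not depend on $\bm u$ or $\bm\delta$. Your list of noise terms omits it.

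The terms you do list are $(1-s)^2\tfrac{2v}{d}\|\tfrac12\bm u-s\bm\Delta\|^2$, $(1-s)^2\tfrac{v}{2d}\|\bm u+2s\bm\delta\|^2$, and the $\bm\varepsilon$--$\bm\varepsilon$ term $(1-s)^4v^2/d$ (which is $O(v^2/d)$, not $O(v^2/d^2)$). These are exactly the remainder by which the exact score variance exceeds $B_v$, i.e.\ what the paper's truncation discards. Keeping them is not harmless: $(1-s)^2\|\bm u+2s\bm\delta\|^2$ is quartic in $s$. Then $B_v$ is no longer quadratic, the top-degree terms of $2A'B_v-AB_v'$ no longer cancel, and no closed form emerges. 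With $b_0=\sigma^2(\|\bm u\|^2+2v)$, $b_1=4\sigma^2(\gamma-v)$, $b_2=\sigma^2(4\|\bm\delta\|^2+2v)$, your root has numerator $-2\sigma^2v(\|\bm u\|^2+2\gamma)$ and denominator $-\sigma^2[4(\|\bm u\|^2\|\bm\delta\|^2-\gamma^2)+4\gamma v+2v\|\bm u\|^2]$. That ratio is Eq.~\eqref{eq:sstar}.

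Your handling of $\bm\Delta$ needs the same repair. The prototypes are $\bm p_\pm=\mustar_\pm+s\bm\Delta\pm s\bm\delta+(1-s)\bm\varepsilon_\pm$, so the offset carries weight $s$, not $1$. The rule of Corollary~\ref{cor:tworisk} thresholds at the prototype midpoint, which makes it nearest-prototype scoring: case~(ii) of Theorem~\ref{thm:shift}, not case~(i). Switching to the shift-invariant contrast $\langle\bm x,\bm p_+-\bm p_-\rangle$ moves the threshold and changes $A(s)$, so you would be optimising a different classifier.

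At $\beta=\eta=0$ the offset still enters the score through $-s(1-s)\langle\bm\Delta,\bm\varepsilon_+-\bm\varepsilon_-\rangle$. Its variance is $2s^2(1-s)^2\tfrac vd\|\bm\Delta\|^2$ whatever the direction of $\bm\Delta$, because the noise is isotropic, so generic position plays no role. It vanishes only because it belongs to the discarded class of terms. Your fallback, not your primary route, is therefore the valid one, and the $\bm\Delta$-independence of $s^*$ is a property of the truncation.

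Adopt that truncation consistently. Keep $\sigma^2\|\bm u+2s\bm\delta\|^2$ and the input--prototype contraction $2\sigma^2v(1-s)^2$, and drop every term in which prototype noise is projected on a fixed vector or paired with itself. With that, your Steps~2--3 go through as written.
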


\looseness=-1 Eqs.~\eqref{eq:errexact}--\eqref{eq:sstar} are verified by Monte Carlo in the released script (risk formula to $<10^{-3}$; Eq.~\eqref{eq:sstar} against the numerical argmin). The corollaries are offered as the precise mechanism of the failure, not a new estimator: $\bm u$ and $\bm\delta$ are exactly what $K$ shots cannot pin down. The prediction: $\lhat$ will sit near $1$ (measured mean $0.96$) while the accuracy-optimal ratio sits far lower ($0.33$); the gap should scale across encoders with the class-independent share of Table~\ref{tab:perbackbone} (Sec.~\ref{sec:backbones}: it fails); subtracting an estimated $\bm\Delta$ inside Eq.~\eqref{eq:jshat} repairs direction, not magnitude. Everything here was written down before the full matrix ran; Sec.~\ref{sec:results} scores the predictions as stated, including the two that failed.

\subsection{The Family's Ceiling Is Geometric}\label{sec:ceiling}
The corollaries explain why the MSE target picks the wrong ratio. What they do not yet give is a bound on what the \emph{best} ratio could achieve. Drop isotropy: $\bm x\,|\,c\sim\mathcal N(\mustar_c,\bm\Sigma)$, $\bm\Sigma\succ0$ arbitrary and shared. For a class pair, $\bm u$ is the mean difference, $\bm\delta$ the class-specific half-difference of the text offsets (Eq.~\eqref{eq:decomp}); every blend scores the pair along $\bm w(s)=\bm u+2s\bm\delta$, $s=1-\lambda$, while a probe may score along any $\bm w\in\mathbb R^d$.

\begin{theorem}[Capacity of the blending family]\label{thm:capacity}
With balanced priors and each direction's optimal threshold, a linear rule with direction $\bm w$ errs with probability $\Phi(-m(\bm w)/2)$, $m(\bm w)=\bm u^{\!\top}\bm w/\sqrt{\bm w^{\!\top}\bm\Sigma\bm w}$. Let $a_1=\bm u^{\!\top}\bm u$, $a_2=\bm u^{\!\top}\bm\delta$ and $q_{11},q_{12},q_{22}$ the corresponding $\bm\Sigma$-quadratic forms. Then:
\emph{(i)} $m^2$ has a unique stationary point on the path, $s^{\star}=\frac{a_1q_{12}-a_2q_{11}}{2\,(a_2q_{12}-a_1q_{22})}$, and its maximum over $s\in[0,1]$ is attained at $s^\star$ or an endpoint;
\emph{(ii)} the attainable separations nest,
\begin{equation}
\max_{s}\,m(\bm w(s))^{2}\;\le\!\!\max_{\bm w\in\operatorname{span}\{\bm u,\bm\delta\}}\!\!m(\bm w)^{2}\;\le\;\bm u^{\!\top}\bm\Sigma^{-1}\bm u,
\label{eq:nest}
\end{equation}
with equality on the right iff $\bm\Sigma^{-1}\bm u\in\operatorname{span}\{\bm u,\bm\delta\}$;
\emph{(iii)} if $\bm\Sigma=\sigma^2\bm I$ then $s^{\star}=0$.
\end{theorem}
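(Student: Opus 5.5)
The plan is to reduce everything to the scalar Fisher-type ratio along a direction and then do one-variable calculus on the blend path. First I establish the error formula. For a fixed direction $\bm w$, the score $\bm w^{\!\top}\bm x$ is Gaussian under each class, with means differing by $\bm u^{\!\top}\bm w$ and common variance $\bm w^{\!\top}\bm\Sigma\bm w$. With balanced priors, the optimal threshold is the midpoint of the two projected means. Each class then errs with probability $\Phi\bigl(-\tfrac12|\bm u^{\!\top}\bm w|/\sqrt{\bm w^{\!\top}\bm\Sigma\bm w}\bigr)$, which gives $\Phi(-m(\bm w)/2)$, with the sign of $m$ absorbed by flipping the rule. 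Since error is monotone in $m^2$, every comparison reduces to comparing $m^2$, and I work with $m^2$ throughout.

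For (i), I will write, along $\bm w(s)=\bm u+2s\bm\delta$,
\[
m^2(s)=\frac{N(s)}{D(s)},\qquad N(s)=(a_1+2sa_2)^2,\quad D(s)=q_{11}+4sq_{12}+4s^2q_{22},
\]
where $D>0$ because $\bm\Sigma\succ0$ (assuming $\bm w(s)\neq\bm0$). Differentiating gives the factorisation
\[
(m^2)'(s)\,D^2=(a_1+2sa_2)\bigl[4a_2D-(a_1+2sa_2)(4q_{12}+8sq_{22})\bigr].
\]
The key step is to expand the bracket and observe that the $s^2$ terms $16a_2q_{22}s^2$ cancel, leaving
\[
4(a_2q_{11}-a_1q_{12})+8s(a_2q_{12}-a_1q_{22}),
\]
which is linear in $s$ and has the single root $s^\star$ as stated. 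The other factor vanishes where $m^2=0$, which is a global minimum of a nonnegative function rather than a competing maximiser. So "unique stationary point" should be read as the unique non-trivial one. I will flag this, along with the degenerate case $a_2q_{12}=a_1q_{22}$, where the bracket is constant and $m^2$ is monotone between its zero and infinity. Since $m^2$ is continuous on the compact interval $[0,1]$, its maximum lies at $s^\star$ (if $s^\star\in[0,1]$) or at an endpoint.

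For (ii), the left inequality holds because every $\bm w(s)$ lies in $\operatorname{span}\{\bm u,\bm\delta\}$. For the right inequality, I apply Cauchy--Schwarz in the $\bm\Sigma$-geometry:
\[
\bm u^{\!\top}\bm w=(\bm\Sigma^{-1/2}\bm u)^{\!\top}(\bm\Sigma^{1/2}\bm w)\le\sqrt{\bm u^{\!\top}\bm\Sigma^{-1}\bm u}\,\sqrt{\bm w^{\!\top}\bm\Sigma\bm w},
\]
which gives $m(\bm w)^2\le\bm u^{\!\top}\bm\Sigma^{-1}\bm u$ for all $\bm w$. Equality holds iff $\bm w\propto\bm\Sigma^{-1}\bm u$. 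The span maximum therefore attains the bound iff some element of the span is proportional to $\bm\Sigma^{-1}\bm u$, i.e.\ iff $\bm\Sigma^{-1}\bm u\in\operatorname{span}\{\bm u,\bm\delta\}$. Because the span is finite-dimensional and $m^2$ is scale-invariant, the supremum over the span is attained, so no limiting argument is needed.

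For (iii), substituting $\bm\Sigma=\sigma^2\bm I$ gives $q_{11}=\sigma^2a_1$ and $q_{12}=\sigma^2a_2$. The numerator $a_1q_{12}-a_2q_{11}$ then vanishes identically. The denominator is $2\sigma^2\bigl(a_2^2-a_1\|\bm\delta\|^2\bigr)$, which is $\le0$ by Cauchy--Schwarz and nonzero unless $\bm\delta\parallel\bm u$, so $s^\star=0$. This is consistent with Corollary~\ref{cor:tworisk}, whose noise-free error is non-decreasing in $s$ when $\beta=\eta=0$.

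The only step I expect to need care is the bookkeeping in (i): verifying the cancellation of the quadratic terms, and correctly separating the spurious zero-of-margin root and the degenerate linear case. The rest is direct substitution.
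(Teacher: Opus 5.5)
Your proposal is correct and follows essentially the same route as the paper's proof: the midpoint-threshold error formula, the quotient-rule factorisation of $(N/D)'$ with the $s^2$ terms cancelling in the bracket, Cauchy--Schwarz for the nesting and its equality case, and direct substitution under isotropy. The only difference is cosmetic. In (ii) you apply Cauchy--Schwarz directly on $\mathbb R^d$, whereas the paper first evaluates the plane maximum explicitly as $\bm c^{\top}\bm Q^{-1}\bm c$, a quantity it later computes numerically. In (iii) the paper also checks that $s^\star=0$ is the maximiser on $[0,1]$, which the statement as given does not require.
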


\looseness=-1 Part \emph{(iii)} is the sharpest form of the objective mismatch: with exact prototypes, the risk-optimal shrinkage is zero unless the covariance has \emph{shape} (the trace is all Eq.~\eqref{eq:lamstar} can see of $\bm\Sigma$), while both \emph{(i)} and the ceiling \emph{(ii)} are functions of the whitened geometry. Part \emph{(ii)} is the capacity statement behind Sec.~\ref{sec:results}, where the blend commands one direction on a one-parameter path and a probe commands all of $\mathbb R^d$.

\looseness=-1 The theorem is a two-class Gaussian idealisation, so we measure its content rather than assume it. With $\bm\Sigma$ from pool residuals (OAS-regularised~\cite{chen2010shrinkage}), the blend path attains on average \CAPRATIO\ of the unrestricted Mahalanobis separation over all class pairs of all $50$ strata; converting Eq.~\eqref{eq:nest} to error gives each stratum a predicted \emph{headroom}, and predicted headroom meets the measured CLAP-minus-oracle surplus at Spearman \CAPSPEAR\ \CAPSPEARCI\ (cluster bootstrap over datasets), \CAPPARTIAL\ after rank-partialling out raw difficulty, medians \CAPWDS\ within datasets and \CAPWBK\ within backbones. Proofs and the per-stratum table are in the supplement; \texttt{code/probe\_capacity.py} regenerates everything.

\section{Estimating the Ratio Without Validation Data}\label{sec:estimation}

If MSE shrinkage sets $\lambda$ by the wrong criterion, set it by accuracy instead, using the only labelled data a few-shot learner legitimately owns: the $K$ support shots.

\subsection{A Leave-One-Out Ratio}
For each candidate $\lambda$ on a grid ($101$ values in $[0,1]$), we score every support sample $\bm x_i$ of class $y_i$ against prototypes in which the sample's own class uses the deflated mean $\muhat^{(-i)}_{y_i}=(K\muhat_{y_i}-\bm x_i)/(K-1)$, other classes their full prototypes, and select the $\lambda$ maximising leave-one-out (LOO) accuracy over all $CK$ samples (Algorithm~\ref{alg:loo}). Unlike HOSO~\cite{vorster2026hoso}, which spends one shot per class as a held-out cache, this uses every shot for both fitting and validation; it requires $K\ge2$. Cross-validation on few shots has been criticised as unreliable~\cite{luo2026fewtrans}, chiefly because splitting changes the effective shot count. That critique bites less here: only a single scalar is selected, and the shot-count shift is analytically correctable. By Eq.~\eqref{eq:lamstar}, $\lambda$ corresponds to an odds ratio $r=\lambda/(1-\lambda)=g^2/v$ with $v\propto1/K$, so a ratio fitted at effective count $K{-}1$ is debiased to $K$ by
\begin{equation}
r\;\longmapsto\;r\cdot\frac{K}{K-1},
\qquad
\lhat_K=\frac{r'}{1+r'} .
\label{eq:kcorr}
\end{equation}
This is the one place the MSE theory, wrong about the \emph{level} of $\lambda$, earns its keep: it correctly predicts the $K$-\emph{dependence}. Empirically the correction is directionally right but small ($+0.1$ to $+0.5$\,pp at low $K$), so the plain LOO ratio is primary. Selecting on the support set does not overfit it, by capacity: one scalar from a grid of $101$ exerts at most $\log 101$ nats of overfitting pressure against $CK$ leave-one-out votes. The regime to distrust is small $C$ \emph{and} small $K$ (EuroSAT at $K{=}2$: twenty votes), where the LOO ratio is noisiest; even there it never underperforms the zero-shot fallback.

\begin{algorithm}[t]
\caption{Leave-one-out blending ratio (support set only)}
\label{alg:loo}
\begin{algorithmic}[1]
\REQUIRE support features $\bm X\!\in\!\mathbb R^{N\times d}$ ($N{=}CK$, $\ell_2$-normalised), labels $\bm y$, text prototypes $\bm T\!\in\!\mathbb R^{C\times d}$, grid $\Lambda$
\STATE $\muhat_c \leftarrow$ class means; $\muhat^{(-i)}\!\leftarrow\!(K\muhat_{y_i}-\bm x_i)/(K{-}1)$ for all $i$
\STATE precompute $\bm A{=}\bm X\hat{\bm M}^{\!\top}$, $\bm B{=}\bm X\bm T^{\!\top}$ and the six Gram scalars of Eq.~\eqref{eq:quad} (cross-class and own-class)
\FOR{$\lambda \in \Lambda$}
  \STATE form scores $\bm S(\lambda)$ by Eq.~\eqref{eq:quad}; overwrite own-class entries $S_{i,y_i}$ with their leave-one-out counterparts
  \STATE $a(\lambda)\leftarrow\frac1N\sum_i \mathbf 1[\arg\max_c S_{ic}(\lambda)=y_i]$
\ENDFOR
\RETURN $\lhat_{\text{LOO}}=\arg\max_{\lambda\in\Lambda} a(\lambda)$; optionally apply Eq.~\eqref{eq:kcorr}
\end{algorithmic}
\end{algorithm}

Every blend interpolates the \emph{same} two prototype sets, so with $\bm A=\bm X\hat{\bm M}^{\!\top}$, $\bm B=\bm X\bm T^{\!\top}$ precomputed once, the cosine score of the re-normalised blend is, exactly,
\begin{equation}
S_{ic}(\lambda)=\frac{\lambda A_{ic}+(1-\lambda)B_{ic}}
{\sqrt{\lambda^2\|\muhat_c\|^2+2\lambda(1-\lambda)\langle\muhat_c,\vt_c\rangle+(1-\lambda)^2\|\vt_c\|^2}},
\label{eq:quad}
\end{equation}
an $O(nC)$ elementwise update per grid point---a thousandfold reduction, verified bit-identical to the naive implementation---driving Algorithm~\ref{alg:loo} and the oracle sweep alike.

\emph{Why a global ratio?} Nothing prevents running Algorithm~\ref{alg:loo} per class, and Eq.~\eqref{eq:jshat} is per-class by construction. But selecting $C$ ratios from $CK$ leave-one-out predictions gives each class only $K$ votes for its own knob, two at $K{=}2$, and selection noise overwhelms signal; per-class LOO underperforms the global ratio everywhere we tried it. Pooling across classes, the empirical-Bayes route, is the natural repair, left to future work (Sec.~\ref{sec:discussion}).

\subsection{Making GDA Validation-Free}
GDA~\cite{wang2024gda} is described as training-free, but its official implementation grid-searches the weight $\alpha$ blending generative and zero-shot logits on a labelled validation set. We supply $\alpha$ by the same leave-one-out rule (the own-class Gaussian mean is refitted for each held-out sample, while the shared covariance, whose dependence on any single sample is $O(1/N)$, is kept fixed), yielding a validation-free GDA that closes about half the gap to the validation-tuned version at $K\in\{2,4\}$ and none of it at $K\ge8$ (Table~\ref{tab:leaderboard}).

\subsection{Reference Points That Use Forbidden Data}
The quantities below anchor the analysis from above, always as bounds, never as methods. The \emph{accuracy-oracle ratio} grid-searches a single global $\lambda$ on the test labels---the best the family's contested hyperparameter can possibly deliver. The \emph{MSE-oracle ratio} evaluates Eq.~\eqref{eq:lamstar} per class with $\mustar,\bm\Sigma^*$ estimated from pool data outside the support set, isolating objective error from estimation error. And \emph{Tip-Adapter} with $\alpha,\beta$ tuned on the test set reproduces the family's historical practice~\cite{zhang2022tip}. The accuracy oracle is deliberately global: a per-class oracle would fit $C$ free parameters to the test labels (overfitting, not a bound), and Sec.~\ref{sec:perclass} measures exactly how much of its apparent gain that is.

\section{Benchmark Protocol}\label{sec:protocol}

\subsection{Datasets and Splits}\label{sec:datasets}
We use ten classification datasets (Table~\ref{tab:datasets}): FGVC-Aircraft~\cite{maji2013aircraft}, Caltech101~\cite{fei2004caltech}, Stanford Cars~\cite{krause2013cars}, DTD~\cite{cimpoi2014dtd}, EuroSAT~\cite{helber2019eurosat}, Flowers102~\cite{nilsback2008flowers}, Food-101~\cite{bossard2014food}, ImageNet~\cite{deng2009imagenet}, Oxford-IIIT Pets~\cite{parkhi2012pets}, and SUN397~\cite{xiao2010sun}. Images, class names, and zero-shot prompt templates come from the publicly pinned \mbox{clip-benchmark} webdataset distributions, whose zero-shot numbers our sanity gate (below) verifies against that project's public results. Deviations from the CoOp-lineage protocol~\cite{zhou2022coop}: (i) splits are clip-benchmark's, not CoOp's \texttt{split\_zhou} files, so absolute numbers are not cell-comparable with that lineage---every method here sees identical splits, as our claims require; (ii) ImageNet and SUN397 ship only evaluation shards, so we form a deterministic per-class 68/32 pool/test split (seeded once, before any experiment ran); (iii) the clip-benchmark Caltech101 is the 102-class VTAB variant whose literal \texttt{background} class no prompt can name, placing zero-shot ${\sim}4$ points below 101-class numbers (a protocol difference, not an extraction bug); (iv) Flowers102's pool holds ten images per class, capping it at $K\le4$; (v) UCF101 has no clip-benchmark distribution and is omitted.

\emph{Support sampling.} For each (dataset, backbone, $K$, seed) cell, $K$ shots per class are drawn from the pool with one of five seeds; all methods in a cell see the identical support set, making every comparison paired. Population statistics for the MSE-oracle come from the pool \emph{minus} the support set, so oracle and estimate never share samples. Test data is touched only by final evaluation and the explicitly-labelled oracles.

\subsection{Backbones and Text Priors}
Five backbones: OpenAI CLIP RN50, ViT-B/32, ViT-B/16, ViT-L/14, and SigLIP ViT-B/16~\cite{zhai2023siglip} via OpenCLIP~\cite{cherti2023openclip}, all frozen, features $\ell_2$-normalised once and cached; SigLIP's different pre-training objective tests whether the diagnosis is CLIP-specific. Four text-prior tiers per dataset: the generic \emph{photo} template (``a photo of a \{\}.''), the \emph{dataset-specific} template, the full clip-benchmark prompt \emph{ensemble}, and \emph{CuPL}-style LLM descriptions~\cite{pratt2023cupl,menon2023dclip} generated with \mbox{gpt-4.1-mini} (three query templates for each of the 2{,}092 classes; prompts and outputs released), cleaned, class-name-prefixed when absent, encoded per sentence, and averaged into a unit-norm prototype, applied uniformly across datasets and text encoders. The four tiers instrument the prior-quality axis of Table~\ref{tab:priors}.

\subsection{Methods}
Fifteen classifiers are evaluated on the dataset-specific tier. Eleven are validation-free (zero-shot, NCM, four blends, GDA at $\alpha{=}1$ and by LOO, HOSO, CLAP, LP++), and four consume data a deployed system would not have (Sec.~\ref{sec:estimation}: the MSE-oracle and accuracy-oracle ratios, Tip-Adapter's test-tuned $(\alpha,\beta)$, and GDA's validation-tuned $\alpha$); the released records log, per cell, which quantity each method selected and from which split.
Validation-free methods: zero-shot; NCM (image prototypes); the James--Stein blend of Eq.~\eqref{eq:jshat} and its $\bm\Delta$-centred variant; the LOO blend with and without $K$-correction; GDA with $\alpha{=}1$ and with $\alpha$ by LOO; and three published baselines---CLAP~\cite{silva2024closer}, LP++~\cite{huang2024lppp}, HOSO~\cite{vorster2026hoso}---re-implemented on cached features from their official code line by line (HOSO from its paper; no code exists). Our GDA and LP++ agree with the official implementations to relative error $10^{-8}$ on identical inputs. Unavoidable deviations, disclosed: no baseline uses image augmentation here (features are cached once; official CLAP averages 20 augmented views, GDA uses 10 augmentation epochs; this hurts GDA most, whose covariance degenerates at $K{=}1$, where we report it as zero-shot); CLAP's anchor scale, unrecoverable from normalised caches, is fixed at its typical value $10$; LP++'s optional early-epoch selection on an extra validation split is disabled (its validation-free path, the paper's stated intent).

\subsection{Statistics and the Combination Guard}\label{sec:stats}
Every accuracy difference is computed \emph{within} cells (identical support sets) and aggregated by a cluster bootstrap whose \textbf{resampling unit is the dataset} (10{,}000 resamples). Cells sharing a dataset share a test split, pool, vocabulary and text prior; resampling the 4{,}800 cells individually would shrink every interval by roughly an order of magnitude. The intervals therefore answer ``would this survive another draw of \emph{datasets}''; $n$ counts cells, the effective sample size is the ten clusters. Ten is few enough that the percentile interval cannot be assumed honest, so we measured it: a variance-matched simulation puts nominal-$95\%$ percentile coverage at $89.3\%$. We therefore report Student-$t$ intervals on the bootstrap standard error ($G-1=9$ df), which the same simulation puts at $93.7$--$94.8\%$; every interval here is the wider, corrected one (\texttt{code/boot\_coverage.py}). No multiplicity correction is applied across the roughly thirty intervals reported; discount lower bounds within a few tenths of zero accordingly---the two closest calls are HOSO's $K{\ge}4$ margin (Table~\ref{tab:headline}, lower bound $+0.01$) and CLAP's SigLIP margin (Table~\ref{tab:perbackbone}, $+0.11$). By-$K$ aggregates are restricted to the 45 dataset$\times$backbone combinations present at every $K$; without this guard, dropped cells masquerade as trends. Cells whose per-class pool residue after support sampling falls below five are dropped and counted (200 of 5{,}000, all Flowers102 at $K\in\{8,16\}$).

\subsection{Verification, Audit Trail, and Reproducibility}
\emph{Sanity gate:} ViT-B/16 prompt-ensemble zero-shot accuracy must land inside a coarse band of published CLIP results before a feature cache is trusted; all ten datasets pass (Table~\ref{tab:datasets}), and the gate caught a mis-specified activation (QuickGELU). \emph{Precision:} eight dataset$\times$backbone shards were computed in both float64 and float32 (the precision of the official baselines, used for the main matrix); every method except HOSO is precision-exact, and HOSO moves by up to $3.1$\,pp (mean $0.44$), so its results carry a ${\sim}0.5$\,pp band (Sec.~\ref{sec:robust}). Choices that could have gone the other way are stated: the pool-residue threshold of five replaced ten solely because ten would silently delete Flowers102, and the failed predictions (corollary~(c), the centred-JS repair, the $K$-correction) are reported with their outcomes. We release the pipeline, the cached features, and the complete per-cell record set: $4{,}800$ cells, carrying all fifteen classifiers on the $1{,}200$ dataset-specific-prompt cells and the eight prompt-independent ones on the other three tiers, with every fitted hyperparameter. Determinism is end-to-end (seeded splits, draws, and initialisations; checksummed caches; per-cell checkpointing), so the matrix extends cell-by-cell while remaining exactly comparable. Total compute, one RTX~4090 workstation: ${\approx}2$ GPU-hours to encode the $1.25$\,M image--backbone pairs plus ${\approx}10$ hours of CPU-bound solver fitting (Sec.~\ref{sec:cost}). Every table and figure regenerates from the released records on a CPU in minutes; every experiment re-runs from the released features without decoding a single image.

\section{Results}\label{sec:results}

\begin{table}[t]
\caption{Benchmark composition (clip-benchmark splits) and the zero-shot sanity gate: ViT-B/16 prompt-ensemble accuracy must land inside a coarse band of published CLIP results before any feature cache is trusted. All ten datasets pass. ImageNet and SUN397 ship only evaluation shards, so a deterministic 68/32 per-class pool/test split is used (Sec.~\ref{sec:datasets}); test sizes are post-split.}
\label{tab:datasets}
\centering\small
\setlength{\tabcolsep}{4pt}
\resizebox{\columnwidth}{!}{%
\begin{tabular}{l ccccc}
\toprule
Dataset & $C$ & $n_{\text{test}}$ & Pool split & ZS ens.\ (\%) & Published\\
\midrule
Aircraft & 100 & 3,333 & official & 24.3 & 20--32 \\
Caltech101 & 102 & 6,085 & official & 82.2 & 80--96 \\
Cars & 196 & 8,041 & official & 64.7 & 60--70 \\
DTD & 47 & 1,880 & official & 44.9 & 40--50 \\
EuroSAT & 10 & 5,400 & official & 55.9 & 36--62 \\
Flowers102 & 102 & 6,149 & official & 71.2 & 65--78 \\
Food101 & 101 & 25,250 & official & 88.7 & 82--92 \\
ImageNet & 1000 & 16,000 & self-split & 68.3 & 64--72 \\
Pets & 37 & 3,669 & official & 89.0 & 85--92 \\
SUN397 & 397 & 34,805 & self-split & 64.4 & 58--70 \\
\bottomrule
\end{tabular}
}
\end{table}

Fig.~\ref{fig:teaser} summarises the three findings; Tables~\ref{tab:perdataset} and~\ref{tab:leaderboard} give full numbers. Every feature cache reproduces published zero-shot behaviour (Table~\ref{tab:datasets}), including the two deliberate protocol quirks, so method differences cannot be extraction artefacts. Differences are paired within cells and quoted as mean [95\% CI] in percentage points (pp); $n$ counts cells.

\subsection{The Theoretically Optimal Ratio Fails}
By Sec.~\ref{sec:theory} the James--Stein blend estimates its own target well (per-class $\lhat$ vs.\ pool-estimated $\lambda^*$: MAE $0.007$), and by Prop.~\ref{prop:dom} it dominates the raw prototype at that target. The classifier it induces nonetheless fails: averaged over all cells it trails the test-set-oracle ratio by $-8.51$ \CI{-12.99}{-4.04} ($n{=}950$ cells in $10$ dataset clusters) and its pooled advantage over zero-shot is $-0.12$ \CI{-8.99}{+8.75}---an average over a sign change, not an equivalence: $10.7$\,pp \emph{below} zero-shot at $K{=}2$ and $6.6$\,pp above at $K{=}16$ (Table~\ref{tab:leaderboard}). The MSE-oracle (Eq.~\eqref{eq:lamstar} at population statistics) confirms objective error: $64.1\%$ mean accuracy against the accuracy-oracle's $72.8\%$ (Table~\ref{tab:leaderboard}, $K\ge2$). The consequence is blunt: at $\lhat\!\approx\!1$ the MSE-optimal rule is, empirically, the nearest-class-mean classifier---beating NCM by $+0.76$ \CI{+0.38}{+1.13}, against NCM's own $-9.27$ \CI{-14.11}{-4.42} to the oracle ratio. \emph{The closed form's entire content is that it discards the text prior}, at the long-known cost in accuracy; what is new is that the family's own derivation leads there. Fig.~\ref{fig:lambda} shows the mechanism: James--Stein ratios crowd the top of the plot (mean $0.96$) wherever the accuracy-optimal ratio lies (mean $0.33$); Fig.~\ref{fig:profile} shows the curve: a broad plateau, then a cliff near $\lambda{=}1$, exactly where Eq.~\eqref{eq:jshat} concentrates.

\begin{figure}[t]
\centering
\includegraphics[width=0.87\columnwidth]{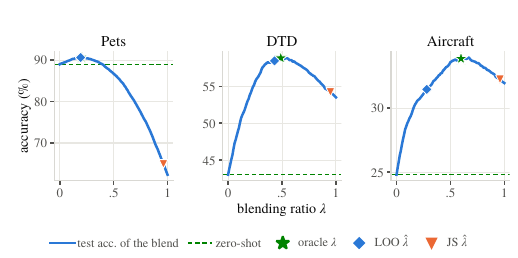}
\caption{Measured risk profiles: test accuracy of the blend vs.\ $\lambda$ (ViT-B/16, $K{=}4$, seed-mean) for a strong (Pets), medium (DTD), and weak (Aircraft) prior. The oracle ratio (star) moves right as the prior weakens; the LOO estimate (diamond) follows from the support set alone, its occasional misses (Aircraft) costing little on the broad plateau; the James--Stein estimate (triangle) sits at the cliff edge on every dataset.}
\label{fig:profile}
\end{figure}

\begin{figure}[t]
\centering
\includegraphics[width=0.80\columnwidth]{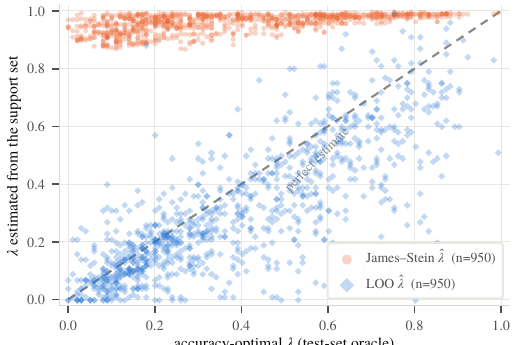}
\caption{Estimated vs.\ accuracy-optimal blending ratio, one point per cell (dataset-specific prompt; per-class estimates averaged for display). The James--Stein estimate saturates near $1$ irrespective of the oracle: a good estimate of the wrong quantity. The LOO estimate follows the diagonal ($\rho=0.79$, MAE $0.12$).}
\label{fig:lambda}
\end{figure}

\looseness=-1 Sec.~\ref{sec:whywrong} predicted both the failure and its signature. Table~\ref{tab:perbackbone} measures the class-independent share of the squared prototype distance at $78.3\%$ \CI{74.6}{82.1} overall: most of the ``bias'' that Eq.~\eqref{eq:mse} attributes to the text prototype is a modality-gap offset~\cite{liang2022mind} that never influences a decision. The effect is strongest on SigLIP ($86.6\%$), whose sigmoid pre-training differs from CLIP's InfoNCE. On strong-prior datasets the damage is severe (Pets: the JS blend loses double digits to zero-shot at low $K$); on weak-prior datasets it merely wastes the prior. The centred variant (subtracting the estimated $\bm\Delta$ before applying Eq.~\eqref{eq:jshat}) lowers the mean ratio to $0.83$ but \emph{worsens} accuracy ($-3.9$ \CI{-5.3}{-2.8} vs.\ plain JS), confirming the caution of Sec.~\ref{sec:whywrong}: identifying the irrelevant component is not the same as knowing how to discount it. Finally, corollary~(c) of Sec.~\ref{sec:theory} fails outright: the per-cell mean $\lhat$ correlates \emph{negatively} with Tip-Adapter's test-tuned $\alpha$ (Spearman $-0.22$ to $-0.43$ by backbone), whereas the LOO ratio correlates positively ($+0.30$ to $+0.43$).

\subsection{The Ratio Is Estimable for Free}
\begin{table*}[t]
\caption{Effect of upgrading the text prior from the generic ``a photo of a \{\}.''\ template, paired within (dataset, backbone, $K$, seed) cells ($n{=}1200$ per row; 95\% CI, cluster bootstrap over datasets). Better priors raise zero-shot accuracy, shrink the prototype distance $\smash{\hat d^2}$, and lower the accuracy-optimal blending ratio---and the support-set LOO estimate $\hat\lambda_{\text{LOO}}$ tracks that shift without being told.}
\label{tab:priors}
\centering\footnotesize
\setlength{\tabcolsep}{4pt}
\begin{tabular}{l cccc}
\toprule
Prior upgrade & $\Delta$ zero-shot acc.\ (pp) & $\Delta\, \hat d^2$ & $\Delta\, \lambda^{\text{oracle}}$ & $\Delta\, \hat\lambda_{\text{LOO}}$\\
\midrule
$\to$ dataset-specific & +1.42 [-0.10, +2.94] & -0.0121 [-0.0331, +0.0088] & -0.031 [-0.051, -0.011] & -0.030 [-0.053, -0.008] \\
$\to$ prompt ensemble & +2.11 [+0.46, +3.76] & -0.0156 [-0.0345, +0.0032] & -0.047 [-0.067, -0.026] & -0.041 [-0.059, -0.022] \\
$\to$ LLM descriptions (CuPL) & +2.28 [+0.05, +4.51] & -0.0177 [-0.0233, -0.0122] & -0.026 [-0.058, +0.005] & -0.028 [-0.050, -0.006] \\
\bottomrule
\end{tabular}
\end{table*}

The LOO ratio of Sec.~\ref{sec:estimation} lands within $-0.82$ \CI{-1.31}{-0.32} of the test-set-oracle ratio overall ($-0.63$ \CI{-0.99}{-0.27} at $K{\ge}4$), tracking it with $\rho=0.79$ and MAE $0.12$ (Fig.~\ref{fig:lambda}). It degrades gracefully: on Pets, where trusting the shots is mostly harmful, its paired advantage over zero-shot stays positive at every $K$. It also responds correctly to interventions on the prior (Table~\ref{tab:priors}): upgrading from the generic prompt template raises zero-shot accuracy by $1.4$--$2.3$\,pp and lowers the accuracy-optimal ratio by $0.026$--$0.047$ (the dataset-specific and ensemble tiers resolvably, CuPL grazing zero), while $\hat d^2$ falls on all three tiers in point estimate but resolvably on only one, and the LOO estimate shifts by an almost identical amount without being told. \emph{A single scalar chosen by leave-one-out on the support set reproduces, to within one point, everything that tuning the ratio on the test set can achieve.} Published numbers relying on test-tuned or validation-tuned ratios~\cite{zhang2022tip,wang2024gda} could have been obtained legitimately.

\subsection{The Family Is Capacity-Limited}\label{sec:capacity}
\begin{table*}[t]
\caption{Per-dataset test accuracy (\%) with CLIP ViT-B/16 and the dataset-specific prompt, averaged over five support-set seeds. All methods left of the rule are validation-free; ``Oracle $\lambda$'' grid-searches the global blending ratio \emph{on the test set} and is an unreachable upper bound for the blending family, not a method. Bold marks the best validation-free entry per row. Flowers102 supports only $K{\le}4$ (its pool holds ten images per class) and is therefore absent here; the mean covers the nine datasets with $K{=}16$ cells.}
\label{tab:perdataset}
\centering\small
\setlength{\tabcolsep}{4.2pt}
\begin{tabular}{l ccccccccc !{\vrule} c}
\toprule
 & Zero-shot & NCM & JS blend & LOO blend & GDA$_{\alpha=1}$ & GDA$_{\alpha\text{-LOO}}$ & HOSO & CLAP & LP++ & Oracle $\lambda$\\
\midrule
\multicolumn{11}{l}{\itshape $K=16$ shots}\\
Aircraft & 24.8 & 40.7 & 40.8 & 41.3 & 46.8 & \textbf{49.0} & 42.0 & 45.7 & 45.4 & 41.7 \\
Caltech101 & 82.4 & 88.0 & 88.1 & 88.4 & 91.5 & 91.5 & 91.5 & 91.5 & \textbf{93.1} & 88.8 \\
Cars & 63.7 & 72.3 & 72.5 & 76.4 & \textbf{81.8} & 81.6 & 78.6 & 80.3 & 81.1 & 76.5 \\
DTD & 43.0 & 62.4 & 62.6 & 64.3 & \textbf{72.0} & 70.8 & 70.4 & 71.0 & 71.6 & 64.9 \\
EuroSAT & 55.8 & 76.2 & 76.2 & 76.9 & 86.7 & \textbf{86.7} & 84.6 & 84.0 & 86.5 & 77.5 \\
Food101 & 88.7 & 84.1 & 84.3 & 89.2 & 89.5 & 89.5 & 89.8 & \textbf{90.2} & 89.9 & 89.3 \\
ImageNet & 68.1 & 65.2 & 65.5 & 72.1 & 73.9 & 73.5 & 73.1 & \textbf{74.8} & 74.8 & 72.2 \\
Pets & 89.0 & 76.8 & 77.3 & 91.5 & \textbf{92.8} & 91.7 & 92.4 & 92.6 & 92.5 & 91.7 \\
SUN397 & 60.5 & 68.2 & 68.4 & 71.5 & 75.1 & 74.8 & 73.9 & 74.7 & \textbf{75.3} & 71.6 \\
\midrule
\textit{Mean} & 64.0 & 70.4 & 70.6 & 74.6 & 78.9 & 78.8 & 77.4 & 78.3 & 78.9 & 74.9 \\

\bottomrule
\end{tabular}
\end{table*}

\begin{table}[t]
\caption{Mean accuracy (\%) of validation-free methods by shot count, aggregated over the 45 dataset$\times$backbone combinations present at every $K$ (the combination guard of Sec.~\ref{sec:stats}; Flowers102 is excluded here). \textbf{The Mean column averages the $K\ge2$ column means}, since several estimators need two shots to exist. Tip-Adapter's grid is skipped on the 50 costliest $K{=}16$ cells; its $K{=}16$ entry covers the remaining 175. The lower block lists reference points that consume held-out or test labels; they are bounds, not methods.}
\label{tab:leaderboard}
\centering\small
\setlength{\tabcolsep}{3.6pt}
\resizebox{\columnwidth}{!}{%
\begin{tabular}{l ccccc c}
\toprule
Method & $K{=}1$ & $K{=}2$ & $K{=}4$ & $K{=}8$ & $K{=}16$ & Mean\\
\midrule
Zero-shot & 64.8 & 64.8 & 64.8 & 64.8 & 64.8 & 64.8 \\
NCM (image prototypes) & 40.9 & 52.5 & 61.5 & 67.6 & 71.3 & 63.2 \\
James--Stein blend, Eq.~\eqref{eq:jshat} & -- & 54.1 & 62.4 & 68.1 & 71.4 & 64.0 \\
Centred JS blend & -- & 42.4 & 58.5 & 67.7 & 71.8 & 60.1 \\
LOO blend & -- & 68.4 & 70.9 & 73.4 & 75.1 & 71.9 \\
LOO blend + $K$-corr. & -- & 68.7 & 71.1 & 73.4 & 75.1 & 72.1 \\
GDA ($\alpha{=}1$) & 64.8 & 57.8 & 69.0 & 75.7 & 79.2 & 70.4 \\
GDA ($\alpha$ by LOO) & 64.8 & 64.6 & 70.8 & 75.7 & 79.0 & 72.5 \\
HOSO~\cite{vorster2026hoso} & -- & 68.6 & 72.0 & 75.2 & 77.8 & 73.4 \\
CLAP~\cite{silva2024closer} & 69.0 & 71.5 & 74.2 & 76.5 & 78.5 & 75.2 \\
LP++~\cite{huang2024lppp} & 66.6 & 70.3 & 73.9 & 76.9 & 79.2 & 75.1 \\
\midrule
\multicolumn{7}{l}{\itshape Reference points that use held-out/test labels}\\
GDA ($\alpha$ on held-out val) & 64.8 & 70.2 & 73.8 & 76.8 & 79.5 & 75.1 \\
Oracle blending ratio (test set) & 68.2 & 69.7 & 71.8 & 74.0 & 75.5 & 72.8 \\
Tip-Adapter ($\alpha,\beta$ on test)~\cite{zhang2022tip} & 68.0 & 69.3 & 71.1 & 72.8 & 75.0 & 72.1 \\
MSE-oracle blend, Eq.~\eqref{eq:lamstar} & 44.4 & 54.5 & 62.5 & 68.1 & 71.4 & 64.1 \\
\bottomrule
\end{tabular}
}
\end{table}

\begin{table}[t]
\caption{Paired accuracy difference (percentage points, cluster-bootstrap 95\% CI, resampling the ten datasets) between each validation-free method and the \emph{test-set-oracle} blending ratio---the best any global $\lambda$ can do, obtained by grid search on the test labels. Positive means the method beats a bound the blending family itself cannot reach; bold marks CIs entirely above zero.}
\label{tab:headline}
\centering\small
\setlength{\tabcolsep}{4pt}
\resizebox{\columnwidth}{!}{%
\begin{tabular}{l cc cc}
\toprule
 & \multicolumn{2}{c}{All $K$} & \multicolumn{2}{c}{$K \ge 4$}\\
\cmidrule(lr){2-3}\cmidrule(lr){4-5}
Method vs.\ oracle $\lambda$ & $\Delta$ [95\% CI] & $n$ & $\Delta$ [95\% CI] & $n$\\
\midrule
CLAP & \textbf{+1.92 [+0.87, +2.98]} & 1200 & \textbf{+2.48 [+1.28, +3.69]} & 700 \\
LP++ & \textbf{+1.46 [+0.34, +2.58]} & 1200 & \textbf{+2.78 [+1.23, +4.34]} & 700 \\
HOSO & +0.36 [-0.63, +1.36] & 950 & \textbf{+1.09 [+0.01, +2.17]} & 700 \\
GDA ($\alpha{=}1$) & -2.64 [-4.84, -0.44] & 1200 & +0.86 [-1.68, +3.41] & 700 \\
LOO blend & -0.82 [-1.31, -0.32] & 950 & -0.63 [-0.99, -0.27] & 700 \\
\bottomrule
\end{tabular}
}
\end{table}

Table~\ref{tab:headline} is the headline: averaged over all cells, CLAP exceeds the test-set-oracle blending ratio by $+1.92$ \CI{+0.87}{+2.98} and LP++ by $+1.46$ \CI{+0.34}{+2.58}; at $K\ge4$ all four point estimates are positive and three intervals exclude zero, GDA's excepted ($\CIonly{-1.68}{+3.41}$: above the bound on seven of ten datasets, far below on the two near-ceiling tasks where $\alpha{=}1$ discards a prior it should keep). The intervals are $t$-corrected and coverage-calibrated (Sec.~\ref{sec:stats}); the probes clear the bound at every aggregation we tried. The reference is the \emph{supremum of the family over its contested hyperparameter}, computed with test labels no method may use: a validation-free probe on the same shots beats the best blend that could ever be tuned---and Theorem~\ref{thm:capacity} says why it can.

Against the LOO blend, the honest reference a practitioner could build, the margins widen: CLAP $+3.05$, LP++ $+3.03$, HOSO $+1.18$, GDA ($\alpha$-LOO) $+0.63$\,pp.

\looseness=-1 Per-dataset behaviour (Table~\ref{tab:perdataset}, Fig.~\ref{fig:sweep}) is uniform in direction, instructive in magnitude. On nine of ten datasets at $K{=}16$ a linear probe or GDA leads. The blend's deficit is largest where its inductive bias is weakest: on fine-grained tasks (Aircraft, Cars, DTD) the directions separating confusable classes are not spanned by any interpolation of two fixed prototype sets, and the probes' freedom to rotate per-class boundaries buys five to seven points over the best blend. At the opposite extreme, Food-101 and Pets sit near the zero-shot ceiling and nothing separates families. ImageNet, the one dataset with a thousand classes, follows the aggregate pattern exactly (at $K{=}16$: CLAP and LP++ both $74.8$, against $72.1$ for the LOO blend and $72.2$ for the oracle-tuned blend), so the capacity argument is no artefact of small label spaces. EuroSAT is the clearest case: with ten classes and abundant within-class structure, GDA's covariance-aware head reaches $86.7\%$ at $K{=}16$ where the \emph{oracle-tuned} blend manages $77.5\%$---a nine-point gap no ratio tuning can touch.

\emph{The GDA case study.} Table~\ref{tab:leaderboard} reads the three regimes against one reference: at $K{=}2$ the validation-tuned $\alpha$ reaches $70.2\%$, the leave-one-out $\alpha$ $64.6\%$, and the $\alpha{=}1$ default only $57.8\%$. Fixing the knob costs $12.4$ points; estimating it by the same rule we use for $\lambda$ recovers $6.8$ of them with no forbidden data; the residual cost of refusing validation labels is $5.6$ points at $K{=}2$ and $0.5$ by $K{=}16$. At $K\ge4$ this validation-free GDA also passes the oracle blend.

\subsection{Backbone Consistency}\label{sec:backbones}
\begin{table}[t]
\caption{Backbone consistency, with the mechanism's own predictor beside the damage it was supposed to predict. Columns: mean zero-shot accuracy; the class-independent share of the squared prototype distance (the quantity Eq.~\eqref{eq:mse} mis-charges, over all four prompt tiers); and the paired deficit/surplus of three representative methods against the test-set-oracle blending ratio (pp, cluster-bootstrap 95\% CI over datasets). The three qualitative findings hold on all five backbones. The \emph{ordering} does not: SigLIP has the largest share and the smallest James--Stein deficit, and the deficit tracks zero-shot headroom instead (Sec.~\ref{sec:backbones}).}
\label{tab:perbackbone}
\centering\scriptsize
\setlength{\tabcolsep}{3pt}
\resizebox{\columnwidth}{!}{%
\begin{tabular}{l cc ccc}
\toprule
 & ZS acc. & class-indep. & \multicolumn{3}{c}{vs.\ oracle blending ratio (pp)}\\
\cmidrule(lr){4-6}
Backbone & (\%) & share (\%) & JS blend & LOO blend & CLAP [95\% CI]\\
\midrule
RN50 & 57.3 & 73.9 [68.3, 79.4] & -11.24 & -0.72 & \textbf{+2.64 [+1.38, +3.90]} \\
ViT-B-32 & 60.9 & 77.6 [73.2, 82.0] & -9.65 & -0.78 & \textbf{+2.19 [+1.06, +3.33]} \\
ViT-B-16 & 64.5 & 78.0 [73.8, 82.2] & -8.84 & -0.67 & \textbf{+2.11 [+1.05, +3.17]} \\
ViT-L-14 & 70.6 & 75.7 [72.0, 79.3] & -7.06 & -0.97 & \textbf{+1.40 [+0.41, +2.38]} \\
SigLIP-B-16 & 73.7 & 86.6 [84.6, 88.6] & -5.77 & -0.94 & \textbf{+1.28 [+0.11, +2.46]} \\
\bottomrule
\end{tabular}
}
\end{table}

Table~\ref{tab:perbackbone} repeats the three central comparisons within each backbone. The three qualitative findings hold on all five: the James--Stein blend sits $5.8$--$11.2$\,pp below the oracle ratio, the LOO blend within $1.2$\,pp of it, and CLAP above it with an interval excluding zero.\par \textbf{A second pre-registered prediction fails here, and we report it as such.} Sec.~\ref{sec:whywrong} predicted that the damage would \emph{scale} with the class-independent share of Table~\ref{tab:perbackbone}. It does the opposite: SigLIP has the largest share ($86.6\%$) and the \emph{smallest} deficit ($-5.8$), ResNet-50 the smallest share and the largest deficit ($-11.2$); the rank correlation between share and $|$deficit$|$ is $-0.7$. A simpler variable explains the ordering completely: zero-shot accuracy ($57.3$ to $73.7$ across the five) has rank correlation $-1.00$ with $|$deficit$|$---the damage tracks \emph{headroom}, not gap geometry. The mechanism of Sec.~\ref{sec:whywrong} is therefore supported as a statement about \emph{why the objective is wrong} (Theorem~\ref{thm:shift}, the offset-translation probe of Sec.~\ref{sec:perclass}, and the failure of the centred repair) and refuted as a cross-encoder \emph{dose--response law}. That the diagnosis nevertheless transfers unchanged to a sigmoid-trained model remains the strongest evidence that it attaches to contrastive image--text pre-training as such, not to CLIP.

\subsection{The One-Shot Regime}
At $K{=}1$ half the table collapses: the LOO ratio and HOSO are undefined, $\hat v$ of Eq.~\eqref{eq:jshat} does not exist, and GDA's pooled covariance is zero without augmentation, reducing it to zero-shot exactly. CLAP improves on zero-shot by $4.2$\,pp at $K{=}1$ (Table~\ref{tab:leaderboard}) while NCM \emph{loses} $24.0$\,pp: a single shot is worth more as a constraint on a text-anchored probe than as an image prototype.

\subsection{Computational Cost}\label{sec:cost}
On cached features, fitting costs per ImageNet cell ($C{=}1000$, $N{=}16{,}000$, RTX~4090, float64) are: GDA $0.4$\,s, LP++ $15$\,s, CLAP $39$\,s, HOSO $193$\,s; the LOO ratio costs a single $O(NC)$ sweep (under a second), and the JS ratio is closed-form arithmetic. Feature extraction dominates end-to-end cost; there is no efficiency argument for blending over a probe at this scale.

\begin{figure*}[t]
\centering
\includegraphics[width=0.86\textwidth]{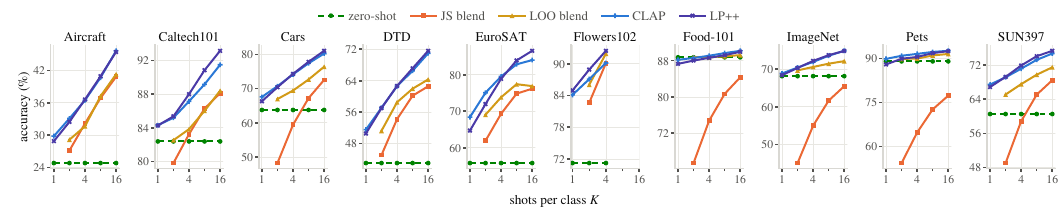}
\caption{Accuracy vs.\ shots for all ten datasets (ViT-B/16, dataset-specific prompt, mean over five seeds): the probes lead almost everywhere; the James--Stein blend hugs or undercuts zero-shot where the prior is strong and forfeits it where weak. Flowers102 ends at $K{=}4$ (Sec.~\ref{sec:datasets}).}
\label{fig:sweep}
\end{figure*}

\subsection{Is One Ratio Too Few?}\label{sec:perclass}
\begin{figure}[t]
\centering
\includegraphics[width=0.79\columnwidth]{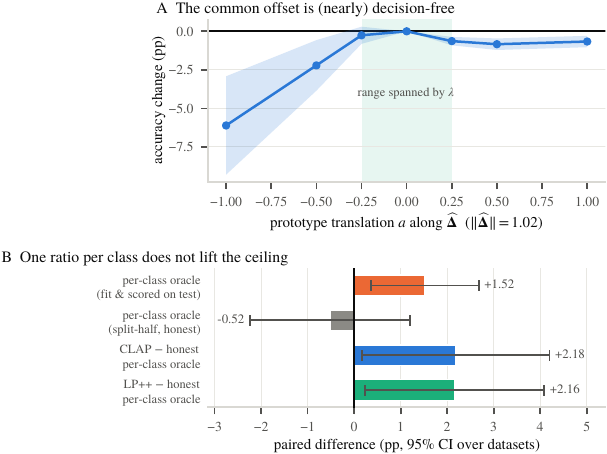}
\caption{Offset and per-class probes. \textbf{A:} accuracy as every prototype is translated by $a\widehat{\bm\Delta}$ before renormalisation, at the oracle ratio; the shaded band is the range over which $\lambda$ itself moves prototypes along that direction. Inside it the common offset is nearly free: what Theorem~\ref{thm:shift}(i) claims for inner-product scoring, and what the MSE objective charges in full. \textbf{B:} the per-class accuracy oracle, fitted and scored on the same test labels and then out of sample.}
\label{fig:probe}
\end{figure}

\looseness=-1 A single global $\lambda$ invites the objection that the family is merely under-parameterised by our choice of oracle. We therefore compute the \emph{per-class} accuracy oracle, one $\lambda_c$ per class, fitted by multi-start coordinate ascent on the test labels (ViT-B/16, ten datasets, five shot counts, three seeds). With $C$ free parameters scored on the labels it was fitted to, it measures memorisation as well as capacity: in sample it gains \PCIN\ over the global oracle; fitted on one test half and scored on the other, only \PCOUT. The gap between those two rows \emph{is} the overfitting, and which side wins is dataset-dependent: out of sample the per-class fit gains on Caltech101 ($+3.0$) and SUN397 ($+2.8$) and loses on ImageNet ($-5.4$), Aircraft ($-2.6$) and DTD ($-1.8$): it helps where classes want different amounts of trust and hurts where $C$ outruns the labels that fit it. Either way the conclusion of Sec.~\ref{sec:capacity} is unmoved: a
validation-free linear probe still beats the honest per-class oracle by \CLAPPC, so the ceiling is
not an artefact of allowing only one ratio.

\looseness=-1 The same probe settles Theorem~\ref{thm:shift} under cosine scoring: translating every prototype by $a\widehat{\bm\Delta}$ at the oracle ratio costs \OFFQ\ at $a{=}{\pm}\tfrac14$ and \OFFH\ at $a{=}{\pm}\tfrac12$ ($\|\widehat{\bm\Delta}\|{=}\DNORM$). But $\bm p_c(\lambda)$ carries $\bm\Delta$ with weight $1-\lambda$, so oracle-to-James--Stein is a translation of $a\!\approx\!-0.63$, where the probe is no longer flat. We therefore ran the counterfactual that decides the question: recompute each cell's oracle-to-James--Stein deficit with the text prototypes as they are, and again with $\widehat{\bm\Delta}$ subtracted from each (estimated on the pool remainder, never the support set; \DCCELLS\ cells, ViT-B/16). The deficit moves from \DCREAL\ to \DCNOD: removing the offset entirely recovers \DCREC, or \DCSHARE\ of the damage. The offset is therefore \emph{not} free at the displacement the ratio actually applies, but it accounts for about a quarter of the loss while the MSE objective charges $78.3\%$ of the squared distance to it, an over-charge of roughly threefold.

\subsection{Two Protocol Objections, Measured}\label{sec:arms}
Both are answered with the released caches rather than argument.

\emph{Augmentation.} Our baselines run on features cached once, while the official CLAP/GDA/LP++/HOSO recipes all consume augmented views. We restored each recipe on a seeded 20-view cache of the support images (\AUGCELLS\ paired cells, identical draws and test sets; \AUGPAIRS\ pairs, all \AUGDS\ datasets). The effect is nil for CLAP (\AUGCLAP) and unresolved for LP++ (\AUGLPP), a point estimate that at face value thins but does not erase its oracle margins (Table~\ref{tab:headline}); HOSO moves \AUGHOSO, inside its precision band. GDA is the exception, in the unexpected direction: augmentation \emph{costs} it \AUGGDAKone\ at $K{=}1$ (a covariance from ten crops of one image is worse than the degenerate fallback) and gains \AUGGDAKtwo\ at $K{=}2$.

\emph{Splits.} Our splits are clip-benchmark's, not the \texttt{split\_zhou} files of the CoOp lineage, so our absolute numbers are not comparable with that literature's tables. Re-extracting on the CoOp splits (\COOPCELLS\ cells; \COOPPAIRS\ pairs of its own ten-dataset grid, all five backbones appearing, ViT-B/16 on every dataset; UCF101 included, ImageNet excluded, as its CoOp protocol needs the full train set) reproduces every qualitative finding: CLAP exceeds the test-set-oracle blending ratio by \COOPCLAP, the LOO blend sits \COOPLOO\ below it, the James--Stein blend \COOPJS, and the class-independent share of the prototype distance is \COOPSHARE.

\looseness=-1 \emph{Modern encoders.} Two 240-cell arms on the full grid ask whether any of this is an artefact of 2021-vintage CLIP. On SigLIP2-B/16~\cite{tschannen2025siglip2}, findings (i) and (ii) reproduce outright ($\lhat$ saturates at \SIGTWOLAMJS\ against an accuracy-optimal \SIGTWOLAMOR), and the capacity finding holds through CLAP, which clears the oracle ratio by \SIGTWOCLAP. LP++ attenuates: \SIGTWOLPP\ at $K{\ge}4$ against the published $+2.78$, and \SIGTWOLPPALL\ over all $K$ (dragged by $K{=}1$); these are intervals a single backbone cannot resolve, so on this encoder the probe half of the headline rests on CLAP. The sharper arm is text-free: DINOv3-B/16~\cite{simeoni2025dinov3} ($86$M parameters, matching ViT-B/16) has no text tower, hence no text prototype and no blending ratio, so it can enter only as a probe-side reference. A plain logistic probe on its features beats the family's test-set oracle by \DINOTHREE\ at $K{\ge}4$, while the identical probe on CLIP features does not (\DINOCLIP): the ceiling of Sec.~\ref{sec:capacity} is a property of the blending family (Theorem~\ref{thm:capacity}), not of CLIP's feature space (per-$K$ tables and protocol in the supplement).

\subsection{Robustness Checks}\label{sec:robust}
\emph{Precision}: float64 re-computation of eight shards leaves every method bit-identical except HOSO (moves up to $3.1$\,pp), so HOSO's $+1.09$ surplus reads as inside its own numerical band; the claims we make rest on CLAP and LP++, which are precision-exact. \emph{Prompt tier}: repeating the $\lambda$-family analyses on the photo, ensemble and CuPL tiers changes levels but no orderings (oracle-vs-LOO gap under a point on every tier; gap share moves two points): the geometry that defeats MSE shrinkage belongs to the encoders, not the prompt. Re-running both probes against each tier's own test-set-oracle ratio (\TIERCELLS\ paired cells, the full \TIERDS-dataset grid) leaves every contrast positive, intervals excluding zero: CLAP \TIERPHOTOCLAP\ on the photo template, \TIERENSCLAP\ on the ensemble and \TIERCUPLCLAP\ on CuPL, LP++ \TIERPHOTOLPP, \TIERENSLPP\ and \TIERCUPLLPP. The margin shrinks monotonically as the prior improves---a better prototype is worth more to the blend than to the probe---but three tiers of it do not close the gap. \emph{Strata}: the headline CLAP-vs-oracle contrast is positive in $43$ of $50$ dataset$\times$backbone strata; the seven exceptions are four Flowers102 strata and three near-ceiling cases, all within $0.3$\,pp of zero---inside median seed noise ($0.4$\,pp).

\section{Discussion}\label{sec:discussion}

\subsection{How to Read the Blending Literature Now}
Blending is not useless. The LOO blend beats zero-shot by $7.6$\,pp and never harms a strong prior, but a one-or-two-point gain over a predecessor, tuned on forbidden labels, lies inside a band free leave-one-out already spans, and wholly below a validation-free probe. New blends should be benchmarked against their family's test-set-oracle ratio and against CLAP or LP++ on identical shots, separating ``a better mechanism'' from ``a better-tuned ratio''---which current evaluation culture~\cite{silva2024closer,huang2024lppp,vorster2026hoso,sheng2025illusion,luo2026fewtrans} does not. This sharpens critiques~\cite{luo2026fewtrans,sheng2025illusion,kravets2025rethinking} that hyperparameter access inflates few-shot gains, adding an existence bound: with unlimited access on its central hyperparameter, the family is still dominated. LOO also removes the excuse that validation-free selection of such scalars is impossible~\cite{luo2026fewtrans}.

\subsection{Scope of the Claim}\label{sec:scope}
The bound covers Eq.~\eqref{eq:blend}, one global interpolation coefficient between the two modality prototype sets, under cosine similarity. It excludes accuracy-tuned per-class coefficients (unbounded test-set overfitting), logit blends with per-branch temperatures and, strictly, sample-level cache models, including Tip-Adapter, whose test-tuned instance we report anyway, and which also falls below the probes (Table~\ref{tab:leaderboard}). The members we evaluate are our own constructions (Sec.~\ref{sec:intro}): published methods in this exact form tune the ratio on forbidden data or fold it into a larger head. The bound is on the \emph{family} and tight by construction: no member beats the test-set-oracle ratio, which is itself beaten. We do not claim linear probes are optimal, only that they are the cheapest widely-available class exceeding the family's supremum. The bound is empirical, with Theorem~\ref{thm:capacity} as its model-level counterpart: in the Gaussian idealisation the family's ceiling is the plane-restricted Mahalanobis separation, and the headroom it implies predicts where the measured surplus is largest. Across ten domains, three orders of magnitude in class count, and six encoders, no counterexample appeared.

\subsection{Why the Bias--Variance Story Misleads, and What Could Rescue It}
The MSE argument fails quantitatively, not conceptually: blending \emph{is} shrinkage, but towards a target whose offset is $78\%$ decision-irrelevant, under a loss that cannot tell. Corollary~\ref{cor:sstar} gives the correct two-class objective and its solution; its multi-class, estimable form is open. A \emph{decision-theoretic shrinkage} would target the pairwise differences $\vt_c-\vt_{c'}$ (Theorem~\ref{thm:shift}); our centred variant, the zeroth-order attempt, failed, showing the correction must be anisotropic, not scalar. An \emph{anisotropic or subspace shrinkage} would estimate which gap directions are decision-relevant, as in the text-aligned projections of~\cite{goswami2026protomix}, but Eq.~\eqref{eq:sstar} needs exactly the quantities ($\bm u$, $\bm\delta$) that $K$ shots estimate worst. Until either exists, Eq.~\eqref{eq:jshat} is a cautionary identity: the most principled-looking closed form here is a positive-part James--Stein rule, and it is precisely wrong. The released records also support oracle-bounding other tuned scalars (TTA temperatures~\cite{shu2022tpt,sheng2025illusion}, interpolation coefficients~\cite{wortsman2022wiseft}, cache sizes~\cite{karmanov2024tda,zhang2024dmn}), several of which we suspect matter as little.

\subsection{Recommendations for Practice}
\emph{(1)} $K\ge2$ shots, no validation data: train CLAP or LP++ on cached features---validation-free, precision-stable, better than every blend we could construct or bound. \emph{(2)} Blend required anyway: set the ratio by Algorithm~\ref{alg:loo}, never on labels the deployment will not have; expect to lose two to three points against a probe. \emph{(3)} $K{=}1$: use CLAP or stay zero-shot; do not blend. \emph{(4)} Evaluating a method that owns a trust parameter: report that parameter's test-set-oracle value alongside it. \emph{(5)} Optimal trust shifts with prior quality (Table~\ref{tab:priors}); a ratio tuned under one prompt tier silently mis-tunes under another, whereas LOO re-adapts by construction.

\subsection{Limitations}\label{sec:limitations}
What this study does not establish. (i)~Our splits are clip-benchmark's, so absolute numbers are not comparable with CoOp-lineage tables; the CoOp-split replication (Sec.~\ref{sec:arms}) shows the \emph{conclusions} transfer, not the levels. (ii)~The augmentation and split arms subsample the grid; their intervals lean on the covered clusters. (iii)~HOSO, re-implemented from its paper (no public code), is the one precision-sensitive method; its comparisons carry a ${\sim}0.5$\,pp band. (iv)~CLAP's anchor scale is fixed rather than recovered. (v)~The per-class accuracy oracle (Sec.~\ref{sec:perclass}) is a multi-start coordinate ascent, not a certified optimum, and its split-half version is ViT-B/16 only. (vi)~Pre-training contamination~\cite{kravets2025rethinking} affects all methods equally but is not corrected for. (vii)~Flowers102 stops at $K{=}4$ and UCF101 is absent. (viii)~Our scope is single-label classification from cached global features. None of these, in our judgement, threatens the three findings, which rest on paired within-cell comparisons that these factors shift jointly rather than differentially.

\section{Conclusion}\label{sec:conclusion}
We set out with a simple question: how much is the blending ratio at the heart of few-shot vision--language adaptation actually worth? Across 4{,}800 paired comparisons, we found the answer to be remarkably consistent. The model family's own bias--variance argument leads to a positive-part James--Stein rule, but that rule optimises prototype error rather than classification, and $78\%$ of the bias it seeks to correct is decision-irrelevant. The ratio that accuracy actually prefers can be estimated from the support set alone. We validated this through leave-one-out, to within one point of a test-set oracle, removing the need to tune on held-out or forbidden labels. Importantly, even the oracle-tuned blend is beaten by validation-free linear probes. The bottleneck is therefore not how one chooses the blending ratio.

The findings suggest a rethink on hyperparameters. The ratio is not an unresolved tuning problem. It can be estimated accurately. Our observation is that the literature has been placing its efforts on precisely this optimisation, while the key question is whether the model family itself is best suited to the task at hand. Future work should reconsider whether simple prototype blending is an adequate adaptation mechanism at all.

\ifCLASSOPTIONcompsoc
  \section*{Acknowledgments}
\else
  \section*{Acknowledgment}
\fi
The authors thank the maintainers of the open-source implementations reproduced in Section~\ref{sec:protocol}.

\balance
\bibliographystyle{IEEEtran}
\bibliography{refs}

\clearpage
\setcounter{section}{0}
\setcounter{equation}{0}
\setcounter{table}{0}
\renewcommand{\thesection}{S\arabic{section}}
\renewcommand{\theequation}{S\arabic{equation}}
\renewcommand{\thetable}{S\arabic{table}}

\twocolumn[
  \begin{@twocolumnfalse}
  \vspace{1em}
  \begin{center}
  {\Large\bfseries Supplementary Material\par}
  \vspace{0.5em}
  \end{center}
  \vspace{1em}
  \end{@twocolumnfalse}
]

\section{Overview and Notation}
\suppself\ contains the complete proofs of Propositions~1--5 and Theorems~1--2 of the main text, the Monte Carlo protocol that verified every formula before it entered the manuscript, the per-stratum table behind the capacity and sampling measurements of Secs.~3.3 and~3.5, and the per-shot tables of the modern-encoder arm of Sec.~6.8. Numbering of the form ``Proposition~3'' or ``Eq.~(11)'' refers to the main text; equations local to \suppselflower\ are numbered (S1), (S2), \dots. Notation follows the main text throughout: for a fixed class, $\bm x_1,\dots,\bm x_K\in\mathbb R^d$ are the support features with mean $\mustar$ and covariance $\bm\Sigma^*$, $\muhat$ is their average, $\vt$ the text prototype, $g^2=\|\vt-\mustar\|^2$, $v=\operatorname{tr}(\bm\Sigma^*)/K$, $\hat d^2=\|\vt-\muhat\|^2$, $\hat v=\operatorname{tr}(\hat{\bm\Sigma})/K$, and $\lhat=(1-\hat v/\hat d^2)_+$. Assumption (A1) is the isotropic Gaussian model $\bm x_i\sim\mathcal N(\mustar,\sigma^2\bm I_d)$ with $\sigma^2=vK/d$. Every result proved here is also checked numerically by \texttt{code/verify\_capacity.py} (Sec.~\ref{ssec:mc}).

\section{Proofs for Section 3.2 of the Main Text}

\restate{Proposition 1 (MSE-optimal ratio, restated).}{$\operatorname{MSE}(\lambda)=(1-\lambda)^2g^2+\lambda^2v$ is strictly convex with unique minimiser $\lambda^*=g^2/(g^2+v)$ and value $\operatorname{MSE}(\lambda^*)=g^2v/(g^2+v)$.}

\begin{IEEEproof}
$\operatorname{MSE}''=2(g^2+v)>0$, so the objective is strictly convex; setting $\operatorname{MSE}'(\lambda)=2\lambda(g^2+v)-2g^2=0$ gives $\lambda^*$, and substitution gives the value:
$(1-\lambda^*)^2g^2+\lambda^{*2}v=\frac{v^2g^2+g^4v}{(g^2+v)^2}=\frac{g^2v}{g^2+v}$.
\end{IEEEproof}

\restate{Proposition 2 (Plug-in, restated).}{$\mathbb E[\hat v]=v$ and $\mathbb E[\hat d^2]=g^2+v$, so $\hat g^2=\hat d^2-\hat v$ is unbiased for $g^2$; substituting $(\hat g^2,\hat v)$ into $\lambda^*$ and clipping to $[0,1]$ gives $\lhat=(1-\hat v/\hat d^2)_+$.}

\begin{IEEEproof}
Unbiasedness of the within-class scatter gives $\mathbb E[\operatorname{tr}\hat{\bm\Sigma}]=\operatorname{tr}\bm\Sigma^*$, hence $\mathbb E[\hat v]=v$. Writing $\vt-\muhat=(\vt-\mustar)-(\muhat-\mustar)$,
\begin{equation}
\mathbb E\|\vt-\muhat\|^2=g^2-2\,\mathbb E\langle\vt-\mustar,\muhat-\mustar\rangle+\mathbb E\|\muhat-\mustar\|^2=g^2+v,
\end{equation}
because the middle term vanishes ($\muhat$ is unbiased and $\vt-\mustar$ is deterministic) and $\mathbb E\|\muhat-\mustar\|^2=\operatorname{tr}\operatorname{Cov}[\muhat]=\operatorname{tr}(\bm\Sigma^*)/K$. Substitution: $\hat g^2/(\hat g^2+\hat v)=(\hat d^2-\hat v)/\hat d^2=1-\hat v/\hat d^2$, the internal $\hat v$ cancelling; clipping to $[0,1]$ yields the positive part.
\end{IEEEproof}

\restate{Proposition 3 (Dominance at the family's own objective, restated).}{Under (A1), if $K\ge2$ and $(K-1)(d-4)\ge2$, then $\mathbb E\|\bm p(\lhat)-\mustar\|^2<\mathbb E\|\muhat-\mustar\|^2$ for every $(\mustar,\vt)$.}

\begin{IEEEproof}
Work in the coordinates of the classical problem. Set
\begin{equation}
\bm X=\muhat-\vt,\qquad
\bm\theta=\mustar-\vt,\qquad
\tau^2=\sigma^2/K=v/d,
\end{equation}
so that $\bm X\sim\mathcal N(\bm\theta,\tau^2\bm I_d)$ and $\hat d^2=\|\bm X\|^2$. Let $S=\frac1K\sum_i\|\bm x_i-\muhat\|^2$; then $S/\tau^2\sim\chi^2_m$ with $m=d(K-1)$, and $S$ is independent of $\bm X$ (independence of sample mean and scatter for Gaussian samples). Since $\hat v=\operatorname{tr}\hat{\bm\Sigma}/K=\frac{1}{K(K-1)}\sum_i\|\bm x_i-\muhat\|^2=\frac{d}{m}S$, the blend error is
\begin{equation}
\bm p(\lhat)-\mustar=\lhat\bm X-\bm\theta,\qquad
\lhat=\Bigl(1-\tfrac{c\,S}{\|\bm X\|^2}\Bigr)_{+},\qquad c=\tfrac dm .
\end{equation}
This is Baranchik's estimator of $\bm\theta$ with $r\equiv1$~\cite{baranchik1970family}; we give the risk computation in full. Consider first the unclipped rule $\bm\delta_c=(1-cS/\|\bm X\|^2)\,\bm X$. Its risk differs from that of $\bm X$ (which is $d\tau^2$) by
\begin{equation}
\mathbb E\|\bm\delta_c-\bm\theta\|^2-d\tau^2
=\mathbb E\Bigl[\frac{c^2S^2}{\|\bm X\|^2}\Bigr]-2c\,\mathbb E\Bigl[S\,\frac{(\bm X-\bm\theta)^{\!\top}\bm X}{\|\bm X\|^2}\Bigr].
\label{eq:riskdiff}
\end{equation}
For the second term, apply Stein's identity $\mathbb E[(\bm X-\bm\theta)^{\!\top}\bm h(\bm X)]=\tau^2\,\mathbb E[\nabla\!\cdot\!\bm h(\bm X)]$ to $\bm h(\bm x)=\bm x/\|\bm x\|^2$, whose divergence is $(d-2)/\|\bm x\|^2$ (integrability requires $d\ge3$, which the hypothesis implies: $(K-1)(d-4)\ge2$ forces $d\ge6$ at $K{=}2$ and $d\ge5$ for every $K\ge3$). With $S\perp\bm X$, $\mathbb E[S]=m\tau^2$ and $\mathbb E[S^2]=m(m+2)\tau^4$, Eq.~\eqref{eq:riskdiff} becomes
\begin{equation}
m\tau^4\,\mathbb E\Bigl[\tfrac{1}{\|\bm X\|^2}\Bigr]\,\bigl[c^2(m+2)-2c(d-2)\bigr].
\label{eq:riskfactor}
\end{equation}
The bracket is non-positive iff $c\le2(d-2)/(m+2)$. With $c=d/m$ and $m=d(K-1)$,
\begin{equation}
\tfrac dm\le\tfrac{2(d-2)}{m+2}
\iff md\ge2d+4m
\iff (K-1)(d-4)\ge2 ,
\end{equation}
dividing by $d>0$ in the last step. When the inequality is strict, \eqref{eq:riskfactor} is strictly negative for every $\bm\theta$, because $\mathbb E[1/\|\bm X\|^2]>0$; when it holds with equality, the unclipped rule merely matches $\bm X$. In either case the positive part decides: $\lhat$ differs from the unclipped coefficient on the event $\{cS>\|\bm X\|^2\}$, which has positive probability for every $\bm\theta$, and replacing a negative coefficient by zero strictly reduces the conditional squared error on that event (the standard positive-part argument, e.g.~\cite{lehmann1998theory}, Thm.~5.7.4 region). Hence
$\mathbb E\|\lhat\bm X-\bm\theta\|^2<\mathbb E\|\bm X-\bm\theta\|^2$, which is the claim after translating back by $\vt$.
\end{IEEEproof}

\section{Proofs for Section 3.3 of the Main Text}

\restate{Proposition 4 (Sampling distribution of $\lhat$, restated).}{Under (A1), $\hat d^2$ and $\hat v$ are independent, $\hat d^{2}\sim\frac{v}{d}\chi^{2}_{d}(g^{2}d/v)$, $\hat v\sim v\,\chi^{2}_{d(K-1)}/(d(K-1))$, and the delta method gives Eq.~(10) of the main text.}

\begin{IEEEproof}
$\muhat\sim\mathcal N(\mustar,\tfrac vd\bm I_d)$, so $\hat d^2=\|\vt-\muhat\|^2$ is a scaled non-central $\chi^2$ with $d$ degrees of freedom and non-centrality $g^2d/v$; the scatter is an independent (Cochran) scaled central $\chi^2$ with $d(K-1)$ degrees of freedom. From $\operatorname{Var}[\chi^2_m(\delta)]=2m+4\delta$,
\begin{equation}
\operatorname{Var}[\hat d^{2}]=\tfrac{2v}{d}(v+2g^{2}),\qquad
\operatorname{Var}[\hat v]=\tfrac{2v^{2}}{d(K-1)} .
\label{eq:a1vars}
\end{equation}
For Eq.~(10), expand $\lhat=1-\hat v/\hat d^2$ around $(\mathbb E\hat v,\mathbb E\hat d^2)=(v,g^2+v)$: the gradient is $\bigl(-\tfrac1{g^2+v},\,\tfrac{v}{(g^2+v)^2}\bigr)$, the mean maps to $1-\tfrac{v}{g^2+v}=\lambda^*$, and independence gives
\begin{equation}
\operatorname{Var}[\lhat]\approx\frac{\operatorname{Var}[\hat v]}{(g^2+v)^2}+\frac{v^2\operatorname{Var}[\hat d^2]}{(g^2+v)^4}.
\label{eq:delta}
\end{equation}
With $1-\lambda^*=\tfrac{v}{g^2+v}$ the first term is $(1-\lambda^*)^2\tfrac{2}{d(K-1)}$ and, using $v(v+2g^2)=(g^2+v)^2-g^4=(g^2+v)^2(1-\lambda^{*2})$, the second is $(1-\lambda^*)^2\tfrac{2}{d}(1-\lambda^{*2})$. Summing and taking the square root gives Eq.~(10); the $O(d^{-1})$ remainder is the second-order delta-method term.
\end{IEEEproof}

\restate{Proposition 5 (Sampling law under arbitrary covariance, restated).}{For $\bm x_i\sim\mathcal N(\mustar,\bm\Sigma^*)$ with any $\bm\Sigma^*\succeq0$, $\hat d^2$ and $\hat v$ remain independent, with $\bm b=\vt-\mustar$: $\operatorname{Var}[\hat d^{2}]=\tfrac{4}{K}\bm b^{\!\top}\bm\Sigma^*\bm b+\tfrac{2}{K^{2}}\operatorname{tr}(\bm\Sigma^{*2})$, $\operatorname{Var}[\hat v]=2\operatorname{tr}(\bm\Sigma^{*2})/(K^{2}(K-1))$, and the delta method gives Eq.~(11) of the main text with $\deff=(\operatorname{tr}\bm\Sigma^*)^{2}/\operatorname{tr}(\bm\Sigma^{*2})$, reducing exactly to Eq.~(10) when $\bm\Sigma^*=\sigma^2\bm I$.}

\begin{IEEEproof}
Independence of the sample mean and the sample covariance holds for Gaussian samples under any $\bm\Sigma^*$, hence $\hat d^2\perp\hat v$.

\emph{Variance of $\hat d^2$.} Write $\muhat=\mustar+\bm\varepsilon$ with $\bm\varepsilon\sim\mathcal N(\bm0,\bm\Sigma^*/K)$. Then
$\hat d^2=\|\bm b-\bm\varepsilon\|^2=g^2-2\bm b^{\!\top}\bm\varepsilon+\|\bm\varepsilon\|^2$.
The linear and quadratic parts are uncorrelated (third moments of a centred Gaussian vanish), so
\begin{equation}
\operatorname{Var}[\hat d^2]
=4\bm b^{\!\top}\tfrac{\bm\Sigma^*}{K}\bm b+2\operatorname{tr}\!\bigl(\tfrac{\bm\Sigma^{*}}{K}\bigr)^{\!2}
=\tfrac4K\bm b^{\!\top}\bm\Sigma^*\bm b+\tfrac{2}{K^2}\operatorname{tr}(\bm\Sigma^{*2}),
\end{equation}
using $\operatorname{Var}[\|\bm\varepsilon\|^2]=2\operatorname{tr}(\operatorname{Cov}[\bm\varepsilon]^2)$ for Gaussian $\bm\varepsilon$.

\emph{Variance of $\hat v$.} $(K-1)\hat{\bm\Sigma}=\sum_{j=1}^{K-1}\bm z_j\bm z_j^{\!\top}$ in distribution, with $\bm z_j\sim\mathcal N(\bm0,\bm\Sigma^*)$ i.i.d.\ (Wishart representation), so $\operatorname{tr}[(K-1)\hat{\bm\Sigma}]=\sum_j\|\bm z_j\|^2$ is a sum of $K-1$ i.i.d.\ terms each with variance $2\operatorname{tr}(\bm\Sigma^{*2})$; dividing by $K(K-1)$ gives
$\operatorname{Var}[\hat v]=2\operatorname{tr}(\bm\Sigma^{*2})/(K^2(K-1))$.

\emph{Delta method.} Identical to \eqref{eq:delta} with the new variances. For the first term, $v=\operatorname{tr}\bm\Sigma^*/K$ gives
\begin{equation}
\frac{\operatorname{Var}[\hat v]}{(g^2+v)^2}
=\frac{(1-\lambda^*)^2\,2\operatorname{tr}(\bm\Sigma^{*2})}{(K-1)(\operatorname{tr}\bm\Sigma^*)^2}
=\frac{(1-\lambda^*)^2\,2}{(K-1)\deff},
\end{equation}
and the second term is $(1-\lambda^*)^2\operatorname{Var}[\hat d^2]/(g^2+v)^2$ exactly as before; together these are Eq.~(11).

\emph{Reduction under isotropy.} With $\bm\Sigma^*=\sigma^2\bm I$ and $\sigma^2=vK/d$: $\deff=d$, $\bm b^{\!\top}\bm\Sigma^*\bm b=\sigma^2g^2$, $\operatorname{tr}(\bm\Sigma^{*2})=d\sigma^4$, so the second radicand term is
$\bigl(\tfrac{4vg^2}{d}+\tfrac{2v^2}{d}\bigr)/(g^2+v)^2=\tfrac2d\,\tfrac{v(2g^2+v)}{(g^2+v)^2}=\tfrac2d(1-\lambda^{*2})$,
recovering Eq.~(10) term by term.
\end{IEEEproof}

\section{Proof for Section 3.4 of the Main Text}

\restate{Theorem 1 (Common shifts are (almost) decision-free, restated).}{Shift every prototype by the same $\bm\Delta$. (i)~Inner-product scoring: predictions are identical. (ii)~Nearest-prototype scoring: decision regions translate rigidly, and the induced risk depends on $\bm\Delta$ only through $\Pi_{\mathcal W}\bm\Delta$, $\mathcal W=\operatorname{span}\{\bm w_{cc'}\}$. (iii)~If $\bm\Delta$ is isotropic relative to $\mathcal W$, $\mathbb E\langle\bm\Delta,\bm w_{cc'}\rangle^2=\|\bm\Delta\|^2\|\bm w_{cc'}\|^2/d$.}

\begin{IEEEproof}
(i) $\langle\bm x,\bm p_c+\bm\Delta\rangle-\langle\bm x,\bm p_{c'}+\bm\Delta\rangle=\langle\bm x,\bm w_{cc'}\rangle$: every pairwise score difference is unchanged for every input, so the arg\,max is unchanged.
(ii) $\|\bm x-(\bm p_c+\bm\Delta)\|^2=\|(\bm x-\bm\Delta)-\bm p_c\|^2$, so the shifted classifier is the original applied to $\bm x-\bm\Delta$: decision regions translate rigidly by $\bm\Delta$. Membership in a region is decided by the affine tests $\langle\bm x,\bm w_{cc'}\rangle\ge\theta_{cc'}$, whose normals span $\mathcal W$; since $\langle\bm\Delta,\bm w_{cc'}\rangle=\langle\Pi_{\mathcal W}\bm\Delta,\bm w_{cc'}\rangle$ for every pair, the translated regions---and hence the induced risk under any input distribution---depend on $\bm\Delta$ only through $\Pi_{\mathcal W}\bm\Delta$.
(iii) Isotropy gives $\mathbb E[\bm\Delta\bm\Delta^{\!\top}]=\tfrac{\|\bm\Delta\|^2}{d}\bm I_d$, so $\mathbb E\langle\bm\Delta,\bm w\rangle^2=\bm w^{\!\top}\mathbb E[\bm\Delta\bm\Delta^{\!\top}]\bm w=\|\bm\Delta\|^2\|\bm w\|^2/d$.
\end{IEEEproof}

\medskip\noindent\textbf{The noise-aware deflection objective (Corollary 2 of the main text).} With estimated image prototypes, $\muhat_\pm=\mustar_\pm+\bm\varepsilon_\pm$, $\bm\varepsilon_\pm\sim\mathcal N(\bm0,\tfrac vd\bm I_d)$, the score variance acquires the estimation noise, and to $O(v/d)$ the denominator $B$ of the deflection ratio becomes
\begin{equation}
B_v(s)=\sigma^{2}\bigl[(\|\bm u\|^{2}{+}2v)+4(\gamma{-}v)\,s+(4\|\bm\delta\|^{2}{+}2v)\,s^{2}\bigr].
\end{equation}
For $\beta=\eta=0$ the numerator is $A(s)=\tfrac12\|\bm u\|^2+\gamma s$, so the stationarity condition $2A'B_v=AB_v'$ is linear-quadratic rather than cubic; collecting terms, the $s^2$ coefficients cancel and the unique root is Eq.~(16) of the main text, clipped to $[0,1]$. Setting $v=0$ recovers $s^*=0$ whenever $\|\bm u\|^2\|\bm\delta\|^2>\gamma^2$ (Cauchy--Schwarz, strict unless $\bm\delta\propto\bm u$), the noiseless statement of Corollary~1.

\section{Proof for Section 3.5 of the Main Text}

\restate{Theorem 2 (Capacity of the blending family, restated).}{Two classes, $\bm x\,|\,c\sim\mathcal N(\mustar_c,\bm\Sigma)$, $\bm\Sigma\succ0$ shared, balanced priors. A linear rule with direction $\bm w$ (oriented so $\bm u^{\!\top}\bm w>0$) and its optimal threshold errs with probability $\Phi(-m(\bm w)/2)$, $m(\bm w)=\bm u^{\!\top}\bm w/\sqrt{\bm w^{\!\top}\bm\Sigma\bm w}$. With $a_1=\bm u^{\!\top}\bm u$, $a_2=\bm u^{\!\top}\bm\delta$, and $q_{11},q_{12},q_{22}$ the $\bm\Sigma$-quadratic forms in $(\bm u,\bm\delta)$: (i)~on the path $\bm w(s)=\bm u+2s\bm\delta$, the unique stationary point of $m^2$ where $m>0$ is $s^{\star}=\frac{a_1q_{12}-a_2q_{11}}{2(a_2q_{12}-a_1q_{22})}$, and the maximum over $s\in[0,1]$ is attained at $s^\star$ or an endpoint; (ii)~$\max_{s}m(\bm w(s))^{2}\le\max_{\bm w\in\operatorname{span}\{\bm u,\bm\delta\}}m(\bm w)^{2}=\bm c^{\!\top}\bm Q^{-1}\bm c\le\bm u^{\!\top}\bm\Sigma^{-1}\bm u$, with equality on the right iff $\bm\Sigma^{-1}\bm u\in\operatorname{span}\{\bm u,\bm\delta\}$; (iii)~if $\bm\Sigma=\sigma^2\bm I$ then $s^{\star}=0$.}

\begin{IEEEproof}
\emph{Error formula.} For the rule $\operatorname{sign}(\bm w^{\!\top}\bm x-\theta)$ the class-conditional errors are $\Phi\bigl((\theta-\bm w^{\!\top}\mustar_+)/\sqrt{\bm w^{\!\top}\bm\Sigma\bm w}\bigr)$ and $\Phi\bigl((\bm w^{\!\top}\mustar_--\theta)/\sqrt{\bm w^{\!\top}\bm\Sigma\bm w}\bigr)$. Their balanced average is minimised at the midpoint $\theta^\star=\bm w^{\!\top}(\mustar_++\mustar_-)/2$ (the two arguments are then equal and $\Phi$ is convex on the relevant branch), giving $\operatorname{err}=\Phi(-m(\bm w)/2)$ with $m$ as stated. Since $\Phi$ is decreasing in $m$, maximising $m^2$ subject to the orientation $\bm u^{\!\top}\bm w>0$ minimises the error.

\emph{(i).} On the path, write $N(s)=(a_1+2sa_2)^2$ and $D(s)=q_{11}+4sq_{12}+4s^2q_{22}>0$, so $m^2=N/D$ and
\begin{equation}
\Bigl(\frac ND\Bigr)^{\!\prime}=\frac{4\,(a_1+2sa_2)\bigl[a_2D-(a_1+2sa_2)(q_{12}+2sq_{22})\bigr]}{D(s)^2}.
\end{equation}
The factor $a_1+2sa_2$ vanishes only where $m=0$, the global minimum of $m^2$; on the region $m>0$, stationarity requires the bracket to vanish. Expanding the bracket, the $s^2$ terms cancel and what remains is linear:
\begin{equation}
a_2q_{11}+2s\,a_2q_{12}=a_1q_{12}+2s\,a_1q_{22},
\end{equation}
whose unique root is $s^\star$ as displayed, provided $a_2q_{12}\ne a_1q_{22}$; if $a_2q_{12}=a_1q_{22}$ the bracket never vanishes, $m^2$ is monotone wherever $m>0$, and the maximum over $[0,1]$ sits at an endpoint. In all cases a continuous function on a compact interval attains its maximum at an interior stationary point or an endpoint.

\emph{(ii).} The first inequality holds because $\bm w(s)\in\operatorname{span}\{\bm u,\bm\delta\}$ for every $s$. For the subspace maximum, let $\bm V=[\bm u\;\bm\delta]$, $\bm c=\bm V^{\!\top}\bm u=(a_1,a_2)^{\!\top}$, $\bm Q=\bm V^{\!\top}\bm\Sigma\bm V$, and substitute $\bm w=\bm V\bm z$:
\begin{equation}
\max_{\bm z\ne\bm0}\frac{(\bm c^{\!\top}\bm z)^2}{\bm z^{\!\top}\bm Q\bm z}
=\max_{\bm y\ne\bm0}\frac{\bigl((\bm Q^{-1/2}\bm c)^{\!\top}\bm y\bigr)^2}{\|\bm y\|^2}
=\bm c^{\!\top}\bm Q^{-1}\bm c,
\label{eq:rayleigh}
\end{equation}
by Cauchy--Schwarz after $\bm y=\bm Q^{1/2}\bm z$, with equality iff $\bm y\propto\bm Q^{-1/2}\bm c$, i.e.\ $\bm z\propto\bm Q^{-1}\bm c$. (When $\bm\delta\propto\bm u$, $\bm Q$ is singular but the span is the line through $\bm u$ and the same computation applies in one dimension.) The identical computation over all of $\mathbb R^d$ gives $\max_{\bm w}m(\bm w)^2=\bm u^{\!\top}\bm\Sigma^{-1}\bm u$, attained exactly on the ray $\bm w\propto\bm\Sigma^{-1}\bm u$. A maximum over a subspace never exceeds the global one; and since the global maximiser is unique up to scale, equality holds iff the subspace contains it, i.e.\ iff $\bm\Sigma^{-1}\bm u\in\operatorname{span}\{\bm u,\bm\delta\}$.

\emph{(iii).} If $\bm\Sigma=\sigma^2\bm I$ then $q_{11}=\sigma^2a_1$, $q_{12}=\sigma^2a_2$, $q_{22}=\sigma^2\|\bm\delta\|^2$, so the numerator of $s^\star$ is $\sigma^2(a_1a_2-a_2a_1)=0$ while the denominator is $2\sigma^2(a_2^2-a_1\|\bm\delta\|^2)<0$ strictly unless $\bm\delta\propto\bm u$ (Cauchy--Schwarz); hence $s^\star=0$. It is the maximum on $[0,1]$: $m^2(0)=a_1/\sigma^2$, the only stationary point with $m>0$ is $s^\star=0$, and $m^2(s)\to a_2^2/(\sigma^2\|\bm\delta\|^2)<m^2(0)$ as $s\to\infty$, so $m^2$ decreases away from $0$. In the degenerate case $\bm\delta=\kappa\bm u$ the path margin is constant in $s$ and $s^\star=0$ attains it trivially (the released code adopts the same convention when the denominator vanishes numerically).
\end{IEEEproof}

\section{Validation and Verification Protocols}\label{ssec:mc}

\emph{Synthetic validation of the estimators (Sec.~3.3 of the main text).} Every statistical component was validated on synthetic ground truth before touching real features: $80$ configurations crossing $d$, $K$, $g$ and isotropic versus power-law spectra, $3{,}000$ replicates each; $\hat g^2$ and $\hat v$ pass unbiasedness $z$-tests within budget, and the analytic $\lambda^*$ matches numerical minimisation to $10^{-6}$ (\texttt{code/synth\_validate.py}, \texttt{code/verify\_theory\_ext.py}). On real features, the per-class $\lhat$ tracks the pool-estimated $\lambda^*$ with Spearman $\rho\approx0.59$ and MAE $0.007$.

\emph{Monte Carlo verification of the capacity, dominance and sampling results.} Every formula proved above was verified numerically before entering the manuscript; the checks live in \texttt{code/verify\_capacity.py} (deterministic seed, single command) and all pass. In brief:

\emph{V1, error formula and threshold inequality.} Two anisotropic Gaussian classes ($d{=}40$, power-law spectrum, a deliberately non-zero common offset $\bm\Delta$), $4\times10^5$ samples: the measured error of the optimal-threshold rule matches $\Phi(-m/2)$ to $<3\times10^{-3}$ at five points along the blend path, and the blend's own prototype-midpoint threshold never beats the optimal threshold, so the theorem's $\Phi(-m/2)$ is a valid lower bound for the blend's achievable error.

\emph{V2, closed form $s^\star$.} 300 random geometries ($d\in[6,60)$, spectral decay in $[0,2]$): the closed form attains the maximum of $m^2$ found by a $2\times10^4$-point grid to within $10^{-6}$ in margin$^2$; under $\bm\Sigma=\bm I$, $|s^\star|<10^{-10}$ in all trials.

\emph{V3, nested chain.} The same 300 geometries: no violation of $m^2_{\text{path}}\le m^2_{\text{plane}}\le m^2_{\text{full}}$ beyond $10^{-8}$; the plane-to-full gap is generically strict; and a constructed case with $\bm\Sigma^{-1}\bm u\in\operatorname{span}\{\bm u,\bm\delta\}$ attains equality to relative error $<10^{-8}$.

\emph{V4, dominance.} Eight $(d,K,g)$ configurations spanning $d\in\{8,\dots,512\}$, $K\in\{2,3,4,16\}$, $g\in[0,20]$, $2\times10^4$ replicates each: the measured MSE gain of $\bm p(\lhat)$ over $\muhat$ is positive (within three standard errors) at every configuration satisfying $(K-1)(d-4)\ge2$.

\emph{V5, generalised SE.} Six covariance spectra at $d{=}256$ (decay $0$ to $2$, plus a spiked case with $\bm b$ aligned to the top eigenvector---the adversarial case for a first-order expansion), $4\times10^4$ replicates: the predicted SE matches the Monte Carlo SD within $8\%$ everywhere, the worst case being the spiked configuration, where the discrepancy is the second-order delta-method remainder (halving the spike halves it), not a formula error; and the general formula reproduces the (A1) formula under $\bm\Sigma=\sigma^2\bm I$ to relative error $<10^{-10}$.

\section{Per-Stratum Capacity and Sampling Measurements}
Table~\ref{tab:strata} reports, for each of the $50$ (dataset, backbone) strata, the quantities behind Secs.~3.3 and~3.5 of the main text; \texttt{code/probe\_capacity.py} regenerates all of them from the released features and records. Per stratum: the pooled within-class covariance $\bm\Sigma$ is estimated from mean-centred pool residuals (up to $100$ per class) with oracle-approximating shrinkage~\cite{chen2010shrinkage}; the three nested separations of Theorem~2 are evaluated in closed form for every class pair, converted to error by $\Phi(-m/2)$, and averaged; \emph{ratio} is the mean $\sqrt{m^2_{\text{path}}/m^2_{\text{full}}}$ and \emph{headroom} the mean error gap between the best path direction and the unrestricted Mahalanobis direction, in percentage points. \emph{Surplus} is the measured CLAP-minus-oracle-blend accuracy gap from the main matrix (dataset-specific prompt tier, all $K$ and seeds). For the sampling law, per-class $\operatorname{tr}\bm\Sigma$, $\operatorname{tr}\bm\Sigma^2$ (Chen--Qin unbiased estimator~\cite{chen2010twosample}) and $\bm b^{\!\top}\bm\Sigma\bm b$ are estimated from pool residuals via Gram matrices; the predicted seed-to-seed SD of $\lhat$ (Eq.~(11)) is the median over classes and is compared with the SD measured across the five recorded support draws (median over classes; Flowers102 has no $K{=}16$ arm). Across strata the measured-to-predicted closure is \SECLOSEKtwo\ at $K{=}2$ and \SECLOSEKsixteen\ at $K{=}16$; $\deff$ has median \DEFFMED\ against nominal $d\in\{512,768,1024\}$; and predicted headroom meets measured surplus at Spearman \CAPSPEAR\ \CAPSPEARCI\ (cluster bootstrap over datasets, Student-$t$ critical value), \CAPPARTIAL\ after rank-partialling out stratum difficulty.

\begin{table*}[t]
\centering
\caption{Per-stratum capacity and sampling measurements ($50$ strata; generated by \texttt{code/probe\_capacity.py} and \texttt{code/capacity\_macros.py}). Ratio: mean path-to-full margin ratio. Headroom: mean error gap (pp) between the best blend-path direction and the unrestricted Mahalanobis direction. Surplus: measured CLAP $-$ oracle-blend accuracy (pp). $\deff$: median per-class participation ratio. SD columns: measured vs.\ predicted (Eq.~(11)) seed-to-seed SD of $\lhat$.}
\label{tab:strata}
\footnotesize
\setlength{\tabcolsep}{4.5pt}
\begin{tabular}{llcccc cc cc}
\toprule
& & & Headroom & Surplus & & \multicolumn{2}{c}{SD of $\lhat$, $K{=}2$} & \multicolumn{2}{c}{SD of $\lhat$, $K{=}16$}\\
\cmidrule(lr){7-8}\cmidrule(lr){9-10}
Dataset & Backbone & Ratio & (pp) & (pp) & $\deff$ & measured & predicted & measured & predicted \\
\midrule
Aircraft & RN50 & 0.47 & 1.79 & +2.69 & 32 & $2.1\times10^{-2}$ & $1.9\times10^{-2}$ & $6.1\times10^{-4}$ & $6.8\times10^{-4}$ \\
Aircraft & ViT-B/32 & 0.54 & 1.15 & +2.76 & 31 & $2.4\times10^{-2}$ & $2.0\times10^{-2}$ & $6.4\times10^{-4}$ & $6.9\times10^{-4}$ \\
Aircraft & ViT-B/16 & 0.56 & 0.53 & +2.61 & 32 & $2.0\times10^{-2}$ & $1.8\times10^{-2}$ & $5.6\times10^{-4}$ & $6.4\times10^{-4}$ \\
Aircraft & ViT-L/14 & 0.55 & 0.14 & +2.14 & 38 & $2.0\times10^{-2}$ & $1.8\times10^{-2}$ & $5.9\times10^{-4}$ & $6.2\times10^{-4}$ \\
Aircraft & SigLIP-B/16 & 0.57 & 0.09 & +0.53 & 44 & $1.3\times10^{-2}$ & $1.3\times10^{-2}$ & $4.2\times10^{-4}$ & $4.4\times10^{-4}$ \\
Caltech101 & RN50 & 0.53 & 0.00 & +2.40 & 30 & $3.0\times10^{-2}$ & $2.5\times10^{-2}$ & $7.3\times10^{-4}$ & $8.8\times10^{-4}$ \\
Caltech101 & ViT-B/32 & 0.59 & 0.00 & +2.35 & 29 & $2.7\times10^{-2}$ & $2.2\times10^{-2}$ & $6.6\times10^{-4}$ & $7.6\times10^{-4}$ \\
Caltech101 & ViT-B/16 & 0.59 & 0.00 & +1.84 & 29 & $2.4\times10^{-2}$ & $2.0\times10^{-2}$ & $6.6\times10^{-4}$ & $7.1\times10^{-4}$ \\
Caltech101 & ViT-L/14 & 0.58 & 0.00 & +1.30 & 32 & $2.5\times10^{-2}$ & $1.9\times10^{-2}$ & $7.1\times10^{-4}$ & $6.8\times10^{-4}$ \\
Caltech101 & SigLIP-B/16 & 0.57 & 0.00 & +1.75 & 33 & $1.8\times10^{-2}$ & $1.7\times10^{-2}$ & $5.6\times10^{-4}$ & $5.8\times10^{-4}$ \\
Cars & RN50 & 0.59 & 0.08 & +4.47 & 28 & $2.7\times10^{-2}$ & $2.6\times10^{-2}$ & $9.1\times10^{-4}$ & $9.1\times10^{-4}$ \\
Cars & ViT-B/32 & 0.61 & 0.06 & +3.74 & 28 & $2.7\times10^{-2}$ & $2.5\times10^{-2}$ & $9.0\times10^{-4}$ & $8.8\times10^{-4}$ \\
Cars & ViT-B/16 & 0.62 & 0.04 & +3.58 & 27 & $2.4\times10^{-2}$ & $2.2\times10^{-2}$ & $8.3\times10^{-4}$ & $7.8\times10^{-4}$ \\
Cars & ViT-L/14 & 0.59 & 0.02 & +2.42 & 26 & $2.5\times10^{-2}$ & $2.1\times10^{-2}$ & $8.0\times10^{-4}$ & $7.3\times10^{-4}$ \\
Cars & SigLIP-B/16 & 0.58 & 0.00 & +0.31 & 30 & $2.0\times10^{-2}$ & $1.8\times10^{-2}$ & $6.9\times10^{-4}$ & $6.1\times10^{-4}$ \\
DTD & RN50 & 0.47 & 0.38 & +5.06 & 23 & $3.3\times10^{-2}$ & $3.1\times10^{-2}$ & $1.1\times10^{-3}$ & $1.1\times10^{-3}$ \\
DTD & ViT-B/32 & 0.53 & 0.23 & +3.39 & 28 & $2.6\times10^{-2}$ & $2.5\times10^{-2}$ & $9.0\times10^{-4}$ & $8.9\times10^{-4}$ \\
DTD & ViT-B/16 & 0.51 & 0.18 & +3.71 & 26 & $2.7\times10^{-2}$ & $2.6\times10^{-2}$ & $8.7\times10^{-4}$ & $9.4\times10^{-4}$ \\
DTD & ViT-L/14 & 0.51 & 0.06 & +2.83 & 34 & $3.2\times10^{-2}$ & $2.7\times10^{-2}$ & $1.0\times10^{-3}$ & $9.9\times10^{-4}$ \\
DTD & SigLIP-B/16 & 0.50 & 0.03 & +3.52 & 30 & $2.4\times10^{-2}$ & $2.3\times10^{-2}$ & $6.9\times10^{-4}$ & $8.2\times10^{-4}$ \\
EuroSAT & RN50 & 0.38 & 2.20 & +4.85 & 11 & $1.6\times10^{-2}$ & $1.5\times10^{-2}$ & $5.4\times10^{-4}$ & $5.0\times10^{-4}$ \\
EuroSAT & ViT-B/32 & 0.40 & 1.95 & +4.50 & 11 & $2.1\times10^{-2}$ & $1.7\times10^{-2}$ & $7.6\times10^{-4}$ & $5.9\times10^{-4}$ \\
EuroSAT & ViT-B/16 & 0.43 & 1.31 & +3.76 & 12 & $1.6\times10^{-2}$ & $1.8\times10^{-2}$ & $7.2\times10^{-4}$ & $5.9\times10^{-4}$ \\
EuroSAT & ViT-L/14 & 0.40 & 0.65 & +3.13 & 22 & $2.3\times10^{-2}$ & $1.8\times10^{-2}$ & $6.9\times10^{-4}$ & $6.2\times10^{-4}$ \\
EuroSAT & SigLIP-B/16 & 0.35 & 1.15 & +4.83 & 13 & $1.3\times10^{-2}$ & $1.2\times10^{-2}$ & $4.3\times10^{-4}$ & $4.1\times10^{-4}$ \\
Flowers102 & RN50 & 0.43 & 0.00 & +0.64 & 24 & $1.2\times10^{-2}$ & $1.4\times10^{-2}$ & -- & $4.7\times10^{-4}$ \\
Flowers102 & ViT-B/32 & 0.48 & 0.00 & -0.97 & 25 & $1.1\times10^{-2}$ & $1.2\times10^{-2}$ & -- & $4.2\times10^{-4}$ \\
Flowers102 & ViT-B/16 & 0.50 & 0.00 & -1.19 & 24 & $1.1\times10^{-2}$ & $1.2\times10^{-2}$ & -- & $3.9\times10^{-4}$ \\
Flowers102 & ViT-L/14 & 0.45 & 0.00 & -2.35 & 23 & $1.1\times10^{-2}$ & $1.1\times10^{-2}$ & -- & $3.8\times10^{-4}$ \\
Flowers102 & SigLIP-B/16 & 0.42 & 0.00 & -0.60 & 22 & $8.3\times10^{-3}$ & $1.0\times10^{-2}$ & -- & $3.4\times10^{-4}$ \\
Food101 & RN50 & 0.65 & 0.06 & -0.28 & 38 & $2.5\times10^{-2}$ & $2.3\times10^{-2}$ & $1.2\times10^{-3}$ & $8.2\times10^{-4}$ \\
Food101 & ViT-B/32 & 0.67 & 0.02 & +0.10 & 38 & $2.7\times10^{-2}$ & $2.3\times10^{-2}$ & $1.1\times10^{-3}$ & $8.2\times10^{-4}$ \\
Food101 & ViT-B/16 & 0.67 & 0.01 & +0.14 & 37 & $2.6\times10^{-2}$ & $2.2\times10^{-2}$ & $1.2\times10^{-3}$ & $7.7\times10^{-4}$ \\
Food101 & ViT-L/14 & 0.64 & 0.00 & -0.16 & 45 & $2.5\times10^{-2}$ & $1.8\times10^{-2}$ & $1.2\times10^{-3}$ & $6.3\times10^{-4}$ \\
Food101 & SigLIP-B/16 & 0.61 & 0.00 & +0.11 & 41 & $1.9\times10^{-2}$ & $1.5\times10^{-2}$ & $8.2\times10^{-4}$ & $5.3\times10^{-4}$ \\
ImageNet & RN50 & 0.60 & 0.00 & +2.02 & 33 & $3.3\times10^{-2}$ & $3.0\times10^{-2}$ & $1.1\times10^{-3}$ & $1.1\times10^{-3}$ \\
ImageNet & ViT-B/32 & 0.68 & 0.00 & +1.30 & 34 & $3.3\times10^{-2}$ & $2.8\times10^{-2}$ & $1.1\times10^{-3}$ & $1.0\times10^{-3}$ \\
ImageNet & ViT-B/16 & 0.68 & 0.00 & +1.24 & 33 & $3.0\times10^{-2}$ & $2.6\times10^{-2}$ & $1.0\times10^{-3}$ & $9.4\times10^{-4}$ \\
ImageNet & ViT-L/14 & 0.68 & 0.00 & +0.78 & 38 & $2.8\times10^{-2}$ & $2.4\times10^{-2}$ & $1.0\times10^{-3}$ & $8.7\times10^{-4}$ \\
ImageNet & SigLIP-B/16 & 0.64 & 0.00 & +0.61 & 39 & $2.3\times10^{-2}$ & $2.0\times10^{-2}$ & $7.5\times10^{-4}$ & $7.1\times10^{-4}$ \\
Pets & RN50 & 0.57 & 0.10 & +0.72 & 34 & $2.2\times10^{-2}$ & $1.9\times10^{-2}$ & $9.1\times10^{-4}$ & $6.4\times10^{-4}$ \\
Pets & ViT-B/32 & 0.59 & 0.06 & +0.33 & 35 & $2.4\times10^{-2}$ & $1.9\times10^{-2}$ & $8.8\times10^{-4}$ & $6.7\times10^{-4}$ \\
Pets & ViT-B/16 & 0.56 & 0.04 & +0.72 & 31 & $2.1\times10^{-2}$ & $1.8\times10^{-2}$ & $7.3\times10^{-4}$ & $6.3\times10^{-4}$ \\
Pets & ViT-L/14 & 0.50 & 0.01 & +0.14 & 35 & $2.4\times10^{-2}$ & $1.9\times10^{-2}$ & $8.0\times10^{-4}$ & $6.4\times10^{-4}$ \\
Pets & SigLIP-B/16 & 0.52 & 0.01 & -0.19 & 36 & $1.7\times10^{-2}$ & $1.3\times10^{-2}$ & $5.4\times10^{-4}$ & $4.4\times10^{-4}$ \\
SUN397 & RN50 & 0.58 & 0.01 & +3.04 & 31 & $3.5\times10^{-2}$ & $3.2\times10^{-2}$ & $1.5\times10^{-3}$ & $1.2\times10^{-3}$ \\
SUN397 & ViT-B/32 & 0.64 & 0.00 & +3.16 & 32 & $3.3\times10^{-2}$ & $2.8\times10^{-2}$ & $1.3\times10^{-3}$ & $1.0\times10^{-3}$ \\
SUN397 & ViT-B/16 & 0.63 & 0.00 & +3.35 & 32 & $3.2\times10^{-2}$ & $2.7\times10^{-2}$ & $1.4\times10^{-3}$ & $9.8\times10^{-4}$ \\
SUN397 & ViT-L/14 & 0.63 & 0.00 & +2.23 & 39 & $3.0\times10^{-2}$ & $2.5\times10^{-2}$ & $1.3\times10^{-3}$ & $8.9\times10^{-4}$ \\
SUN397 & SigLIP-B/16 & 0.59 & 0.00 & +1.20 & 38 & $2.3\times10^{-2}$ & $2.0\times10^{-2}$ & $8.4\times10^{-4}$ & $6.8\times10^{-4}$ \\
 
\bottomrule
\end{tabular}
\end{table*}

\section{The Modern-Encoder Arm}
The robustness arm of Sec.~6.8 of the main text repeats the analysis on encoders of the current generation; Tables~\ref{tab:siglip2} and~\ref{tab:probes} give the per-shot detail behind the two sentences the main text spends on it. Both arms run the full ten-dataset grid (five seeds; the standard thin-pool drop rule leaves $240$ cells each), and every interval is the paper's cluster bootstrap over datasets with the Student-$t$ correction.

\emph{SigLIP2-B/16}~\cite{tschannen2025siglip2} carries a text tower, so the blending family is defined on it and the standard runner applies unchanged (\texttt{code/run\_formal.py}). Table~\ref{tab:siglip2} repeats the paper's central contrasts: the James--Stein blend trails the test-set-oracle ratio at every shot count while the LOO blend tracks it, and CLAP clears the oracle with an interval excluding zero. LP++ attenuates on this encoder: its $K{\ge}4$ margin ($+1.50$) sits below the published five-backbone $K{\ge}4$ figure ($+2.78$), and pooled over all $K$ its point estimate is negative ($-0.16$ \CI{-1.45}{+1.14}, dragged by $K{=}1$: $-4.12$ \CI{-6.06}{-2.17}); a single backbone's $140$ cells over ten clusters resolve none of these.

\emph{DINOv3-B/16}~\cite{simeoni2025dinov3} and \emph{DINOv2-B/14}~\cite{oquab2024dinov2} have no text tower: no text prototype exists, so the blending ratio of Eq.~(1) is undefined for them and they can enter only as probe-side reference points (\texttt{code/run\_vision\_only.py}). The probe is a multinomial logistic regression on frozen CLS features whose regularisation is reported both at its own test-set oracle (the apples-to-apples comparison against the family's test-set-oracle ratio) and selected validation-free by leave-one-shot-out on the support set---reporting a probe at one arbitrary setting is the practice the main text criticises. Labels, splits and support draws are shared with the CLIP side cell for cell, so every difference in Table~\ref{tab:probes} is paired. At $86$M parameters DINOv3-B/16 matches CLIP ViT-B/16's budget, isolating the pre-training objective from model size. The DINOv3 weights are gated on the Hub, so the pipeline may take them from a public mirror; \texttt{code/check\_dinov3.py} first asserts the architecture, parameter count, preprocessing geometry (DINOv3 declares a direct $224{\times}224$ resize with no centre crop, unlike DINOv2's resize-then-crop) and feature sanity before any experiment may depend on the checkpoint.

\begin{table*}[t]
\centering
\caption{SigLIP2-B/16: the paper's central contrasts, per shot count (pp versus the test-set-oracle blending ratio, paired within cells; cluster bootstrap over datasets, Student-$t$). Generated by \texttt{code/modern\_macros.py} from the released records.}
\label{tab:siglip2}
\footnotesize
\setlength{\tabcolsep}{5pt}
\begin{tabular}{l cccc}
\toprule
 & James--Stein blend & LOO blend & CLAP & LP++ \\
\midrule
$K=1$ & -- & -- & $+0.30$ \CI{-0.49}{+1.08} & $-4.12$ \CI{-6.06}{-2.17} \\
$K=2$ & $-11.57$ \CI{-16.30}{-6.84} & $-1.32$ \CI{-2.43}{-0.21} & $+1.22$ \CI{-0.08}{+2.52} & $-0.84$ \CI{-2.35}{+0.68} \\
$K=4$ & $-5.98$ \CI{-8.74}{-3.21} & $-1.02$ \CI{-1.99}{-0.04} & $+1.48$ \CI{+0.17}{+2.79} & $+0.67$ \CI{-0.67}{+2.01} \\
$K=8$ & $-3.87$ \CI{-5.73}{-2.02} & $-0.56$ \CI{-0.90}{-0.22} & $+1.77$ \CI{+0.42}{+3.12} & $+1.73$ \CI{-0.16}{+3.62} \\
$K=16$ & $-2.51$ \CI{-3.88}{-1.13} & $-0.48$ \CI{-0.79}{-0.18} & $+1.80$ \CI{+0.44}{+3.17} & $+2.20$ \CI{+0.10}{+4.29} \\
\midrule
$K\ge4$ pooled & $-4.19$ \CI{-6.08}{-2.29} & $-0.70$ \CI{-1.21}{-0.18} & $+1.68$ \CI{+0.37}{+2.98} & $+1.50$ \CI{-0.22}{+3.22} \\
all $K$ & $-6.13$ \CI{-8.63}{-3.63} & $-0.86$ \CI{-1.50}{-0.22} & $+1.29$ \CI{+0.17}{+2.42} & $-0.16$ \CI{-1.45}{+1.14} \\
 
\bottomrule
\end{tabular}
\end{table*}

\begin{table*}[t]
\centering
\caption{Text-free reference points: a plain logistic probe on self-supervised features versus the blending family's test-set-oracle ratio (pp, paired within cells). ``Oracle reg.''\ scores the probe at its best test-set regularisation---supremum against supremum; ``LOO reg.''\ selects it validation-free on the support set. The same probe on CLIP's own features is the control: the family's ceiling is cleared by the text-free encoder, not by the probe construction.}
\label{tab:probes}
\footnotesize
\setlength{\tabcolsep}{5pt}
\begin{tabular}{l cccc}
\toprule
 & DINOv3-B/16 (oracle reg.) & DINOv3-B/16 (LOO reg.) & DINOv2-B/14 (oracle reg.) & CLIP ViT-B/16 (oracle reg.) \\
\midrule
$K=1$ & $-9.03$ \CI{-18.12}{+0.07} & -- & $-14.30$ \CI{-25.09}{-3.51} & $-23.78$ \CI{-33.64}{-13.92} \\
$K=2$ & $-0.66$ \CI{-7.60}{+6.28} & $-2.39$ \CI{-9.11}{+4.33} & $-7.00$ \CI{-15.20}{+1.21} & $-13.91$ \CI{-21.29}{-6.54} \\
$K=4$ & $+4.55$ \CI{-1.56}{+10.66} & $+3.56$ \CI{-2.56}{+9.68} & $-2.01$ \CI{-7.60}{+3.58} & $-6.51$ \CI{-11.58}{-1.43} \\
$K=8$ & $+7.62$ \CI{+0.38}{+14.85} & $+6.54$ \CI{-0.82}{+13.90} & $+1.57$ \CI{-2.88}{+6.01} & $-1.74$ \CI{-5.63}{+2.16} \\
$K=16$ & $+9.76$ \CI{+2.06}{+17.46} & $+8.87$ \CI{+1.10}{+16.64} & $+5.33$ \CI{+0.23}{+10.42} & $+1.99$ \CI{-1.31}{+5.29} \\
\midrule
$K\ge4$ pooled & $+7.21$ \CI{+0.41}{+14.01} & $+6.22$ \CI{-0.62}{+13.07} & $+1.50$ \CI{-2.98}{+5.97} & $-2.24$ \CI{-6.25}{+1.76} \\
 
\bottomrule
\end{tabular}
\end{table*}

\end{document}